\pgfplotsset{compat=newest}
\newcommand{\ourscheme}{ByITFL\xspace}
\newcommand{\ourschemelo}{LoByITFL\xspace}
\newcommand{\TotalUser}{\ensuremath{n}}
\newcommand{\Byzantine}{\ensuremath{b}}
\newcommand{\Colluding}{\ensuremath{t}}
\newcommand{\Dropout}{\ensuremath{e}}
\newcommand{\partition}{\ensuremath{m}}
\newcommand{\TotalLocalIter}{\ensuremath{C}}
\newcommand{\SSthreshold}{\ensuremath{k}}
\newcommand{\Dataset}{\ensuremath{D}}
\newcommand{\model}{\ensuremath{\bm{w}}}
\newcommand{\dimension}{\ensuremath{d}}
\newcommand{\usersamples}{\ensuremath{o_i}}
\newcommand{\globalIteration}{\ensuremath{\rho}}
\newcommand{\localIteration}{\ensuremath{c}}
\newcommand{\learningRate}{\ensuremath{\eta}}
\newcommand{\trustscore}{\ensuremath{\mathrm{TS}}}
\newcommand{\modelupdate}{\ensuremath{\bm{u}}}
\newcommand{\realnormalmodelupdate}{\ensuremath{\tilde{\bm{u}}}}
\newcommand{\normalmodelupdate}{\ensuremath{\bar{\bm{u}}}}
\newcommand{\discripoly}{\ensuremath{h}}
\newcommand{\colset}{\ensuremath{\mathcal{T}}}
\newcommand{\byzset}{\ensuremath{\mathcal{B}}}
\newcommand{\honestset}{\ensuremath{\mathcal{H}}}
\newcommand{\define}{\ensuremath{\triangleq}}
\newcommand{\sumone}{\ensuremath{\Sigma_1}}
\newcommand{\sumtwo}{\ensuremath{\mathbf{\Sigma}_2}}
\newcommand{\random}{\ensuremath{\lambda}}
\DeclareMathOperator*{\argmin}{arg\,min}
\newcommand{\aggresult}{\bm{\nu}}
\newcommand{\approdegree}{\ensuremath{\tau}}
\newcommand{\randomvector}{\ensuremath{\mathbf{r}}}
\DeclareMathOperator{\AGG}{AGG}
\newtheorem{thm}{Theorem}
\newtheorem{lem}{Lemma}
\newtheorem{remark}{Remark}
\def\BibTeX{{\rm B\kern-.05em{\sc i\kern-.025em b}\kern-.08em
    T\kern-.1667em\lower.7ex\hbox{E}\kern-.125emX}}
\begin{document}

\title{Perfect Privacy for Discriminator-Based Byzantine-Resilient Federated Learning \\
\thanks{This project has received funding from the German Research Foundation (DFG) under Grant Agreement Nos. BI 2492/1-1 and WA 3907/7-1. %
Part of this work was presented at IEEE ITW'24 \cite{xia2024byzantine} and FL-AsiaCCS'25\cite{xia2024lobyitfl}.} %
}

\author{\IEEEauthorblockN{Yue Xia, Christoph Hofmeister, Maximilian Egger, Rawad Bitar} \\
\IEEEauthorblockA{
School of Computation, Information and Technology, Technical University of Munich, Munich, Germany \\
\{yue1.xia, christoph.hofmeister, maximilian.egger, rawad.bitar\}@tum.de}
}
\maketitle

\begin{abstract}
Federated learning (FL) shows great promise in large-scale machine learning but introduces new privacy and security challenges. We propose \ourscheme and \ourschemelo, two novel FL schemes that enhance resilience against Byzantine users while keeping the users' data private from eavesdroppers. To ensure privacy and Byzantine resilience, our schemes build on having a small representative dataset available to the federator and crafting a discriminator function allowing the mitigation of corrupt users' contributions. ByITFL employs Lagrange coded computing and re-randomization, making it the first Byzantine-resilient FL scheme with perfect Information-Theoretic (IT) privacy, though at the cost of a significant communication overhead. \ourschemelo, on the other hand, achieves Byzantine resilience and IT privacy at a significantly reduced communication cost, but requires a Trusted Third Party, used only in a one-time initialization phase before training.
We provide theoretical guarantees on privacy and Byzantine resilience, along with convergence guarantees and experimental results validating our findings.

\end{abstract}

\begin{IEEEkeywords}
Byzantine Resilience, Federated Learning, Information-Theoretic Privacy, Private Aggregation
\end{IEEEkeywords}

\section{Introduction}
Federated learning (FL)~\cite{mcmahan2017communication} emerged as a promising paradigm enabling a central server (federator) to train neural networks on distributed private data stored at a large number of users. The training follows an iterative structure. Per iteration, the federator sends the current global model to the users, who compute local model updates based on their local data and return these updates. The federator aggregates the users' local model updates using a certain aggregation rule and uses this aggregate to update the global model. The process is repeated until the model achieves the desired performance.

Compared to centralized machine learning, FL tackles data privacy by allowing the users to keep their sensitive data locally and send only model updates to the federator. However, these local model updates, i.e., gradients or weights, still carry sensitive information about the users' data, which can be exploited, e.g., using model inversion attacks~\cite{zhu2019deep, geiping2020inverting}, to learn more information about the users' private data. To address this risk, private\footnote{Private aggregation is referred to as ``secure aggregation'' in the literature. We use the nomenclature private aggregation to avoid confusion between privacy and security (resilience) against malicious clients.} aggregation protocols~\cite{bonawitz2017practical, aono2017privacy,kairouz2021advances} are introduced, ensuring that the federator only receives the aggregated local model update necessary for the progress of the learning algorithm; hence, guaranteeing privacy~\cite{kairouz2021advances}. 

Beyond privacy, FL faces security threats from Byzantine users, named after the Byzantine generals problem~\cite{lamport1982byzantine}, who maliciously manipulate the global model through corrupt local updates. The authors of~\cite{blanchard2017machine} show the vulnerability of any simple linear aggregation rule, such as FedAvg \cite{mcmahan2017communication}, against Byzantine attacks. Even a single Byzantine user can take full control of the learning process, either causing convergence to a corrupt model or preventing convergence altogether~\cite{blanchard2017machine}. To mitigate the effect of Byzantine users, numerous \emph{robust aggregation rules} have been proposed, e.g.,~\cite{blanchard2017machine,yin2018byzantine,guerraoui2018hidden,cao2020fltrust,zhao2022fedinv,guerraoui2024robust}. On a high level, robust aggregation rules compute some statistics of the local updates to identify the outliers. The outliers are then either excluded or given low weight on the final aggregation. 

Therefore, in FL settings, there is an \emph{inherent tension between privacy and Byzantine resilience.} On the one hand, resilience requires access to individual local updates to learn their statistics. On the other hand, privacy guarantees require concealing individual updates. This inherent tension makes developing methods that are jointly secure and private a challenging task, that we will address in this work.

Many existing schemes tackle privacy and security jointly. However, none of these schemes achieve the desired Information-Theoretic (IT) privacy guarantee with Byzantine resilience in the described FL setting. For instance, some schemes rely on clustering the users into smaller groups, and therefore compromise privacy~\cite{velicheti2021secure,xhemrishi2025fedgt} by leaking aggregated gradients of clusters of users; some require multiple federators during the execution of the learning algorithm~\cite{he2020secure,hao2021efficient,ma2022shieldfl,xu2022privacy,zhong2024wvfl,dong2021oblivious,miao2024rfed}; and many give different privacy guarantees, such as computational privacy~\cite{so2020byzantine,jahani2023byzantine,roy2022eiffel,gehlhar2023SAFEFL,fereidooni2021safelearn,dong2023privacy,ben2024scionfl,hu2023efficient,zhang2024nspfl,lu2023robust,lycklama2023rofl,alebouyeh4793556privacy,dong2021oblivious,miao2024rfed,hou2024priroagg,xing2024no,yazdinejad2024robust} and differential privacy~\cite{naseri2020local,ma2022differentially,allouah2023privacy, gu2023dp}. Among these works, the closest to this work are BREA~\cite{so2020byzantine} and ByzSecAgg~\cite{jahani2023byzantine}. Both schemes use the distance-based robust aggregation rule called Krum~\cite{blanchard2017machine} and rely on secret sharing to make Krum privacy-preserving. The computational privacy guarantee in~\cite{so2020byzantine,jahani2023byzantine} comes from the use of cryptographic commitments. Those commitments ensure that Byzantine users encode the same input local model into multiple shares given to all other users. Computationally breaking the commitments reveals the value of the input local model. In addition, due to the use of Krum, both schemes leak the pairwise distances between local updates to the federator. Furthermore, BREA leaks some extra information to the federator, through not re-randomizing secret sharings, cf.~\cite{xia2024byzantine}. This is remedied in ByzSecAgg by incorporating such a re-randomization step.

While computational privacy and differential privacy have been the primary focus in most studies due to their efficiency, both come with their challenges. Computational privacy relies on computational hardness assumptions, which fail against computationally unbounded adversaries. Differential privacy (DP) obfuscates the users' local updates using noise inserted by the users, which negatively affects\footnote{IT private aggregation can be combined with DP to allow for higher privacy guarantees for a fixed model utility. The reason is that less noise would be needed since the federator only observes the aggregation of the local updates, cf.~\cite{chen2022fundamental}.} the utility of the model. Furthermore, IT privacy offers the strongest guarantee when only a limited number of entities collude to compromise privacy, withstanding even computationally unbounded adversaries and without compromising model accuracy. 

We introduce \ourscheme and \ourschemelo, two Byzantine-resilient and IT private FL schemes. We utilize ideas from FLTrust \cite{cao2020fltrust} for Byzantine resilience, where the federator possesses a small representative root dataset to obtain a trustworthy model update as a reference. This enables the computation of a trust score (TS) for each user through a \emph{discriminator function}. To enable IT privacy, we embed each user's update in a finite field and use secret sharing. To this end, we restrict the scope to polynomial discriminator functions, which are compatible with secret sharing.

\begin{figure}[!t]
    \vspace{0.2cm}
    \centering
    \hspace*{-4.9cm} %
    \begin{adjustbox}{center}
        \resizebox{0.75\textwidth}{!}{\input{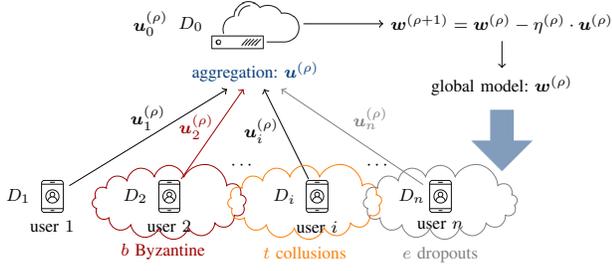}}
    \end{adjustbox}
    \caption{Federated learning system.}
    \vspace{-0.3cm}
    \label{fig:fl}
\end{figure}

\section{System Model and Preliminaries}

For an integer $\TotalUser$, we denote by $[\TotalUser]$ the set of positive integers $\{1,\cdots,n\}$. For a real number $x$, we denote by $\lfloor x \rfloor$ the largest integer less than or equal to $x$. Vectors are denoted by bold lowercase letters, e.g., $\bm{x}$. The dot product of two vectors $\bm{x}$ and $\bm{y}$ is denoted by $\langle \bm{x},\bm{y}\rangle$. Given $x_1, \cdots, x_n$ and a set $\mathcal{S} \subseteq [\TotalUser]$ of indices, we define $x_{\mathcal{S}} \define \{x_i: i \in S\}$. For two random variables $X$ and $Y$, $H(X)$ and $I(X;Y)$ denote the entropy of $X$ and the mutual information of $X$ and $Y$, respectively. The base of the logarithm will be clear from the context.

\subsection{Federated Learning Model}
We consider the FL setting with an honest-but-curious federator orchestrating the learning algorithm on the data of $\TotalUser$ users, as illustrated in \cref{fig:fl}. Among the users, up to $\Byzantine$ can be Byzantine, and up to $\Colluding$ can collude with each other (and with the federator) to infer information about other users' data. In addition, in every iteration, up to $\Dropout$ users can drop out and be unresponsive. Each user $i \in [\TotalUser]$ holds a local dataset $\Dataset_i \in \mathbb{R}^{\usersamples \times \dimension}$ drawn according to a (possibly distinct) distribution $\mathcal{D}_i$. %
As in~\cite{cao2020fltrust}, we assume that the federator possesses a small root dataset $\Dataset_0$ representing\footnote{
In practice, this small root data set can be obtained, e.g., through manually labelling a public dataset, see \cite{cao2020fltrust} for details.} the union of the users' data $\Dataset=\bigcup_{i=1}^{\TotalUser}\Dataset_i$. The goal is to find a model $\model$ that best represents the users' private data, i.e., given a loss function $F(\model, \Dataset_i)$ parametrized by $\model$, the federator wants to solve the optimization problem
\begin{equation}\label{eq:optimization}
    \min_{\model}\sum_{i=1}^{\TotalUser}F(\model, \Dataset_i).
\end{equation}

The federator employs the iterative stochastic gradient descent algorithm that starts with an initial global model $\model^{(0)}$ and local models $\model^{(0)}_1,\cdots,\model^{(0)}_n$ held by the users. At each global iteration $\globalIteration$, the federator broadcasts the current global model $\model^{(\globalIteration)}$ to all users. Each user $i\in [n]$ initializes its local model as the current global model, i.e., $\model^{(\globalIteration,0)}_i=\model^{(\globalIteration)}$, and updates it for $\TotalLocalIter \geq 1$ local iterations as
\[
    \model^{(\globalIteration,\localIteration+1)}_i = \model^{(\globalIteration,\localIteration)}_i-\learningRate_i \cdot \nabla F(\model_i^{(\globalIteration,\localIteration)};\tilde{\Dataset}_i), 
\]
where $\learningRate_i$ is the local learning rate, $\localIteration$, $0 \leq \localIteration \leq \TotalLocalIter -1$, is the local iteration index, and $\nabla F(\model_i^{(\globalIteration,\localIteration)};\tilde{\Dataset}_i)$ is the gradient of the loss function evaluated at a subset of user $i$'s data $\tilde{\Dataset}_i\subseteq\Dataset_i$. Upon finishing local training, users return their model updates $
    \modelupdate_i^{(\globalIteration)}=\model_i^{(\globalIteration,\TotalLocalIter)}-\model^{(\globalIteration)}
$ 
to the federator. 

Meanwhile, the federator runs the same computations on $\Dataset_0$ to obtain a federator model update $\modelupdate_0^{(\globalIteration)}$.
Upon receiving the local updates, the federator aggregates them according to some aggregation rule $\AGG$, i.e., it computes $\aggresult^{(\globalIteration)} = \AGG(\modelupdate_0^{(\globalIteration)}, \cdots, \modelupdate_n^{(\globalIteration)})$. The federator obtains the global update $\modelupdate^{(\globalIteration)}=\aggresult^{(\globalIteration)}$. Consider $\learningRate$ as the global learning rate, the federator updates the global model\footnote{
While presented in the simple gradient descent setting, our scheme does not depend on the exact update rule and is applicable to, e.g., momentum, higher order methods, and adaptive learning rate schedules.}
\[ %
    \model^{(\globalIteration+1)} = \model^{(\globalIteration)}-\learningRate \cdot \modelupdate^{(\globalIteration)}.
\] %

\subsection{Threat Model and Defense Goals}
We denote by $\colset \subset [\TotalUser]$, $|\colset| = \Colluding$, the set of colluding users, by $\byzset \subset [\TotalUser]$, $|\byzset| = \Byzantine$, the set of Byzantine users and by $\honestset \define [n] \setminus \colset \cup \byzset$ the set of benign users. 

\subsubsection{IT Privacy} %
We do not assume curious entities to have limited computing resources. Our schemes guarantee IT privacy of the benign users' local updates in each iteration.

We consider a set of $\colset$ colluding users. The federator is assumed to be honest-but-curious, i.e., it tries to infer private data about the benign users, but it honestly conducts the protocol, and does not proactively leak data that would compromise the system or give an advantage to adversaries. 

Since we allow the federator to collude with the set $\colset$ of colluding users. We require that the colluding users and the federator cannot learn further information about the benign users' local model updates $\modelupdate_{\honestset}^{(\globalIteration)}$ beyond what can inherently be inferred from
\begin{enumerate*}[label=\emph{(\alph*)}]
    \item the colluding users' local model updates $\modelupdate_{\mathcal{T}}^{(\globalIteration)}$ and the federator model update $\modelupdate_0^{(\globalIteration)}$;  \item the colluding users' local datasets $\Dataset_{\mathcal{T}}$ and the the root dataset $\Dataset_0$; 
    \item the current global model $\model^{(\globalIteration)}$; and 
    \item the current aggregation $\aggresult^{(\globalIteration)}$;  
\end{enumerate*}
all of which are required by the learning algorithm and may give information about the other users' local datasets.
For global iteration $\globalIteration$, the privacy guarantee is formulated as:
\begin{equation}\label{eq:privacy_combine}
    I(\modelupdate_{\honestset}^{(\globalIteration)}; M_{\mathcal{T}}^{(\globalIteration)}, M_0^{(\globalIteration)} \!\mid \!\modelupdate_{\mathcal{T}}^{(\globalIteration)}, \modelupdate_0^{(\globalIteration)}, \Dataset_{\mathcal{T}}, \Dataset_0, \model^{(\globalIteration)}, \aggresult^{(\globalIteration)})\!=\!0, 
\end{equation}
where $M_{\mathcal{T}}^{(\globalIteration)}$ is the set of potential messages\footnote{We will consider interactive algorithms, where the clients and the federator exchange messages to mitigate the effect of Byzantine users.} received by the colluding users during the execution of the algorithm at iteration $\globalIteration$, and $M_0^{(\globalIteration)}$ denotes the messages received by the federator during the execution of the algorithm at iteration $\globalIteration$. 
We omit the global iteration index $\globalIteration$ in the rest for brevity.

\subsubsection{Byzantine Resilience} We consider Byzantine users arbitrarily deviating from the protocol and having access to all users' datasets. Specifically, Byzantine users may: \begin{enumerate*}[label=\emph{(\alph*)}]
    \item corrupt their local updates; and \item perform the secret sharing and all other computations dishonestly. 
\end{enumerate*}  A federated learning scheme is said to be Byzantine resilient if it converges, i.e., solves the optimization problem in \cref{eq:optimization}, even if up to $\Byzantine$ Byzantine users behave maliciously.

\subsubsection{Dropout Tolerance} Users can drop out during the protocol. This may be due to users experiencing delays, network congestion or similar reasons. A dropout tolerant federated learning scheme converges even if up to $\Dropout$ users drop out per iteration of the algorithm.

\subsubsection{Desired Schemes} Our goal is to design a FL scheme that simultaneously satisfies the following: \begin{enumerate*}[label=\emph{(\alph*)}]
    \item IT privacy against an honest-but-curious federator and against any collusion of up to $\Colluding$ users; \item Byzantine resilient against $\Byzantine$ Byzantine users; and \item dropout tolerance of up to $\Dropout$ users.
\end{enumerate*}

\subsection{Preliminaries} 
\subsubsection{Threshold Secret Sharing}
We rely on the powerful tool of $(\TotalUser,\SSthreshold,\Colluding)$-\emph{threshold secret sharing}. Consider a secret vector $\bm{s}$ drawn from a finite alphabet, usually a finite field $\mathbb{F}_p^\dimension$, partitioned into $\partition=\SSthreshold-\Colluding$ sub-vectors\footnote{We assume $\partition \mid \dimension$. Otherwise, this can be ensured through zero-padding.}, $\bm{s}_i\in \mathbb{F}_p^{\dimension/\partition}$, i.e., $\bm{s}^T=[\bm{s}_1^T,\cdots,\bm{s}_\partition^T]$. An $(\TotalUser,\SSthreshold,\Colluding)$-threshold secret sharing (SecShare) scheme encodes $\bm{s}$ into $\TotalUser$ secret shares denoted by $\bm{s}[i]$ for $i \in [n]$. The encoding utilizes a degree-$(\SSthreshold-1)$ polynomial $f(x)$ encoding the $\partition$ sub-vectors with $\Colluding$ vectors drawn independently and uniformly at random from the finite field $\mathbb{F}_p^{\dimension/\partition}$. An example of such a polynomial is given in the sequel. Each secret share is an evaluation of the encoding polynomial at a distinct non-zero evaluation point $\alpha_i \in \mathbb{F}_p$, i.e., $\bm{s}[i]=f(\alpha_i)$ for $i\in [\TotalUser]$. Any $\SSthreshold\leq\TotalUser$ or more shares allow full reconstruction of $\bm{s}$, while any $\Colluding<\SSthreshold$ or less shares are statistically independent of $\bm{s}$. We call $\SSthreshold$ the threshold of the SecShare scheme. Instantiations of a $(\TotalUser,\SSthreshold,\Colluding)$-threshold SecShare scheme can be Shamir SecShare~\cite{shamir1979share}, McEliece-Sarwate SecShare~\cite{mceliece1981sharing}, Lagrange coded computing~\cite{yu2019lagrange} or additive SecShare (see e.g.~\cite{bendlin2011semi}). 

The SecShare schemes we consider in this paper are linear, i.e., homomorphic with addition and scaling by a constant, however they are not inherently homomorphic with multiplication.
Thus, additional steps are needed for multiplication on the secret shares, i.e., either by re-randomizing the coefficients of the product of the polynomials and requiring more users for decoding~\cite{gennaro1998simplified,asharov2017full}, or by unlocking the multiplicative homomorphism through converting multiplication to linear computations using Beaver triples~\cite{beaver1992efficient}.

In our setting, users secret-share some information with each other that will be further processed. Verifying that all the shares come from the same encoding can be done through the use of cryptographic commitments to the coefficients of the encoding polynomial, e.g.,~\cite{so2020byzantine,jahani2023byzantine}. However, this verification step weakens the privacy to computational privacy guarantees. To guarantee IT privacy, IT verifiable secret sharing (ITVSS)~\cite{BGW,asharov2017full} is needed. ITVSS uses a bivariate polynomial $\bm{S}(x,y)$ to share the secret, where each user receives two univariate polynomials $f^i(x)=\bm{S}(x,\alpha_i)$ and $\bm{g}^i(x)=\bm{S}(\alpha_i,y)$ as its share, where $f(x)$ will be the encoding polynomial of the actual secret sharing and $\bm{g}(x)$ will be the authentication information to guarantee the use of the identical encoding polynomial $f(x)$. The secret share is a valid share if $f^i(\alpha_j)=\bm{S}(\alpha_j,\alpha_i)=\bm{g}^j(\alpha_i)$, which can be verified with the help of all other users~\cite{BGW,asharov2017full}.

\subsubsection{Lagrange Coded Computing}\label{sec:LCC}
Lagrange Coded Computing (LCC)~\cite{yu2019lagrange} is one instantiation of the threshold SecShare schemes. 
In LCC, a secret vector $\bm{s}$ is partitioned and secret-shared among $\TotalUser$ users using the degree-$(\partition+\Colluding-1)$ polynomial
\begin{equation}
    \begin{aligned}
        f_{\bm{s}}{(x)} & = \sum_{j \in [\partition]}\bm{s}_j\cdot \prod_{l \in [\partition+t]\setminus \{j\}} \frac{x-\beta_l}{\beta_j-\beta_l} \\
        & + \sum_{j \in [\Colluding]}\bm{r}_{j} \cdot \prod_{l \in [\partition+t] \setminus \{\partition+j \}} \frac{x-\beta_l}{\beta_{\partition+j}-\beta_l}, \forall j \in [\TotalUser],
    \end{aligned}
\end{equation}
where $\beta_1, \cdots, \beta_{\partition+t}$ are $\partition+t$ distinct non-zero elements from $\mathbb{F}_p$ and the $\bm{r}_{j}$'s are chosen independently and uniformly at random from $\mathbb{F}_p^{\dimension/\partition}$. Note that $f_{\bm{s}}(\beta_1)=\bm{s}_{1}, \cdots, f_{\bm{s}}(\beta_{\partition})=\bm{s}_{\partition}$. Secret shares are computed by evaluating $f_{\bm{s}}(x)$ at $\TotalUser$ distinct non-zero values $\{ \alpha_l \}_{l \in [\TotalUser]}$, which are selected from $\mathbb{F}_p$ such that $\{ \alpha_l \}_{l \in [\TotalUser]} \cap \{ \beta_l \}_{l \in [\partition]}=\varnothing$, i.e. $\bm{s}[i]=f_{\bm{s}}(\alpha_i)$.
The reconstruction of $\bm{s}$ follows by interpolating $f_{\bm{s}}(x)$ and evaluating it at $x = \beta_1, \cdots, \beta_m$.

It is worth mentioning that for an arbitrary degree-$\approdegree$ polynomial $\discripoly$, LCC allows the computation of $\discripoly(\bm{s})$ performed over the secret shares of $\bm{s}$. Specifically, each user $i \in [\TotalUser]$, holding $\bm{s}[i]=f_{\bm{s}}(\alpha_i)$, computes $\discripoly(\bm{s}[i])=h(f_{\bm{s}}(\alpha_i))$ locally. This gives an evaluation of the polynomial $\discripoly(f_{\bm{s}}(x))$ at the point $\alpha_i$. Upon having more than $(m+t-1) \approdegree + 1$ correct evaluations from the users, $\discripoly(f_{\bm{s}}(x))$ can be interpolated. Obtaining $\discripoly(\bm{s})$ is done by evaluating $\discripoly(f_{\bm{s}}(x))$ at $\{ \beta_l \}_{l \in [\partition]}$, i.e., $\discripoly(\bm{s})=[\discripoly(\bm{s}_1)^T,\cdots, \discripoly(\bm{s}_{\partition})^T]^T=[\discripoly(f_{\bm{s}}(\beta_1))^T,\cdots,\discripoly(f_{\bm{s}}(\beta_\partition))^T]^T$.

\subsubsection{Beaver Triples}
To unlock the multiplicative homomorphism of threshold SecShare schemes, we use \emph{Beaver triples}. A {Beaver triple} consists of two random variables $\gamma$ and $\omega$ drawn independently and uniformly at random from a finite field $\mathbb{F}_p$ and their product $\kappa=\gamma \omega$. The values of $\gamma, \omega$ and $\kappa$ are secret-shared independently to $\TotalUser$ users using an $(\TotalUser,\SSthreshold, \Colluding)$-threshold SecShare scheme.

Assume we have two secrets $s$ and $v$ and a Beaver triple that are secret-shared with $\TotalUser$ users using an identical threshold SecShare scheme. Each user $i$ holding the shares of the Beaver triple $\gamma[i]$, $\omega[i]$, $\kappa[i]$, and the shares of the secrets $s[i]$ and $v[i]$, wants to compute a secret share of the product $sv$ without knowing the values of $s$ nor $v$. 
To that end, each user: 
\begin{enumerate*}[label=\emph{(\alph*)}]
    \item computes $s[i]-\gamma[i]$ and $v[i]-\omega[i]$;
    \item sends the results to one dedicated user (in our case, the federator), who reconstructs $s-\gamma$ and $v-\omega$ and publicly announces their values; and
    \item computes its secret share of the product as 
\end{enumerate*}
$
(sv)[i]\define(s-\gamma)(v-\omega)+(s-\gamma)\omega[i]+(v-\omega)\gamma[i]+\kappa[i],
$
which only involves addition and scaling of secret shares. 
Reconstructing from enough shares $(sv)[i]$ gives $sv\!=\!(s-\gamma)(v-\omega)+(s-\gamma)\omega+(v-\omega)\gamma+\kappa$. 

Therefore, using Beaver triples, we convert the multiplication on the secret shares to linear computations, which can be trivially solved due to the additive homomorphism of the chosen threshold SecShare scheme.

\section{Main Results}
We introduce \ourscheme and \ourschemelo, two Byzantine-resilient, dropout tolerant, and IT private FL schemes. The former has a high communication cost incurred by the joint privacy and Byzantine resilience requirements. The latter uses a trusted third party only during an initialization phase to reduce communication costs.
Each training iteration of \ourscheme and \ourschemelo consists of the following main steps:
\begin{enumerate}[label=\Alph*.]
    \item Users \textbf{compute, normalize and quantize} their local \textbf{model updates}. Similarly, the federator computes, normalizes and quantizes the federatot model update.
    \item Each user's \textbf{update} is \textbf{secret-shared} among all users. 
    \item \textbf{Validation of the normalization} based on the received secret shares.
    \item Users \textbf{compute a secret representation} of the aggregation of the validated updates.
    \item The federator receives shares of the aggregation from the users to \textbf{reconstruct the aggregated updates}.
\end{enumerate}

The main steps are illustrated in \cref{fig:training}. To guarantee Byzantine resilience, our schemes utilize ideas from FLTrust \cite{cao2020fltrust}. Specifically, a TS is computed for each user using a discriminator function. Users' local updates are scaled by their TSs during aggregation. To simultaneously enable IT privacy, we design a new discriminator function based on polynomials that can be computed on the secret shares. The main challenge is designing the discriminator function such that it both \begin{enumerate*}[label=(\emph{\alph*})]
    \item is computable over secret shares in a private manner; and
    \item prevents the malicious gradients from affecting the convergence of the algorithm.
\end{enumerate*}
Next, we provide the main ingredients and theoretical guarantees for each of our schemes.

\begin{figure}[!t]
    \vspace{0.2cm}
    \centering
    \hspace*{-0.8cm} %
    \begin{adjustbox}{center}
        \resizebox{0.5\textwidth}{!}{\input{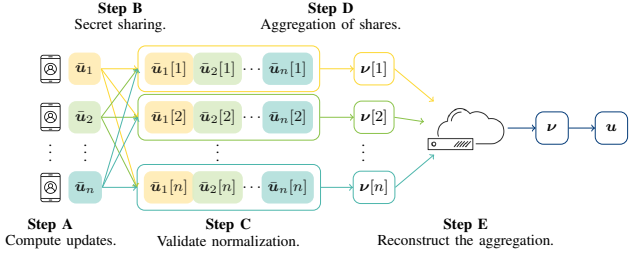}}
    \end{adjustbox}
    \caption{The main steps in each training iteration of \ourscheme and \ourschemelo.}
    \vspace{-0.3cm}
    \label{fig:training}
\end{figure}

\subsection{\ourscheme}
Each local update vector is partitioned into $\partition$ sub-vectors and secret-shared among all users using LCC~\cite{yu2019lagrange}, which is a threshold SecShare scheme and is described in \cref{sec:LCC}. The shares are checked for corruptions using an ITVSS scheme~\cite{BGW, asharov2017full}. Users compute the aggregation on the secret shares using a carefully designed discriminator function, and re-randomize~\cite{gennaro1998simplified} before sending the result to the federator to maintain perfect privacy. Treating the erroneous computations sent by Byzantine users as errors and the dropouts as erasures in a Reed-Solomon (RS) code, cf. \cite{mceliece1981sharing}, the federator decodes the error-free aggregation result. 

\begin{thm}\label{thm:theorem_by} 
Consider a federated learning setting with $\TotalUser$ users, out of which $\Byzantine$ are Byzantine, $\Colluding$ are curious and $\Dropout$ may drop out at any iteration with $\TotalUser \geq 2\Byzantine+(\approdegree+2)\cdot (\partition+\Colluding-1)+\Dropout+1$, where $\partition$ is the number of sub-vectors into which each local update is partitioned and $\approdegree$ is the degree of the discriminator polynomial. \ourscheme  guarantees the following:
\begin{enumerate}[label=\arabic*), leftmargin=1.5em, itemsep=0pt, parsep=0pt, topsep=0pt]
        \item IT privacy against any $\Colluding$ users and the federator according to \cref{eq:privacy_combine} in each iteration;
        \item Resilience against $\Byzantine$ Byzantine users and robustness against $\Dropout$ dropouts;
        \item Communication cost of $O(\frac{\dimension}{\partition}n^3+n^4)$ scalars per user and $O(\frac{d}{m}n+n^2)$ for the federator, per iteration. Computation cost of $O\left((\frac{d}{\partition}n^3+n^4)\log^2n\log\log n\right)$ per user and
        $O\left((\frac{d}{m}n+n^2)\log^2n\log\log n\right)$ at the federator; and
        \item Convergence as shown in \cref{thm:convergence}.
    \end{enumerate}
    
\end{thm}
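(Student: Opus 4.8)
The plan is to prove items~1)--3) directly and to invoke \cref{thm:convergence} for item~4). For \emph{IT privacy} (item~1), I would show that the transcript $(M_{\colset},M_0)$ observed by the colluding users and the federator can be simulated from the conditioning variables on the right-hand side of \cref{eq:privacy_combine} together with independent fresh randomness, so that no additional information about $\modelupdate_{\honestset}$ is revealed. Concretely: (i)~in Step~B each benign update $\normalmodelupdate_i$ is shared with an $(\TotalUser,\SSthreshold,\Colluding)$-threshold LCC scheme, $\SSthreshold=\partition+\Colluding$, so the $\Colluding$ shares held collectively by $\colset$ are, by the threshold property, a uniformly random tuple independent of $\normalmodelupdate_i$ and of all other updates, hence samplable without any knowledge of $\modelupdate_{\honestset}$; (ii)~for an honest dealer the ITVSS consistency and complaint-resolution messages of Step~C reveal only evaluations of its bivariate sharing polynomial at the $\Colluding$ colluding points, which are already fixed by the shares in~(i) and fresh randomness, while the norm-validation output equals the public normalization target because benign updates are normalized; (iii)~any opening of masked shares routed through the federator for the multiplications needed in the discriminator or the norm validation (e.g.\ via Beaver triples) discloses only values of the form $s-\gamma$ with $\gamma$ freshly and independently shared, hence uniform and independent of the secret; and (iv)~the re-randomization performed before Step~E makes the aggregation-sharing polynomial $P$ uniformly random subject only to $P(\beta_l)=\aggresult_l$ for $l\in[\partition]$, so that whatever the federator reconstructs is a function of $\aggresult$ and independent randomness, while the $\Colluding$ evaluations of $P$ available to $\colset$ are recomputable from~(i). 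Combining these simulators via the chain rule for mutual information gives $I(\modelupdate_{\honestset};M_{\colset},M_0\mid\modelupdate_{\colset},\modelupdate_0,\Dataset_{\colset},\Dataset_0,\model,\aggresult)=0$.

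For \emph{Byzantine resilience and dropout tolerance} (item~2), the key structural fact is that after Step~D the federator holds evaluations, at the $\TotalUser$ distinct points $\alpha_1,\dots,\alpha_{\TotalUser}$, of a single polynomial whose degree is at most $(\approdegree+2)(\partition+\Colluding-1)$; I would establish this bound by tracking the degree through the construction --- the degree-$(\partition+\Colluding-1)$ LCC encodings composed with the degree-$\approdegree$ discriminator, the trust-score weighting of the updates, and the re-randomization --- and it is exactly this degree that makes the decoding radius below match the hypothesis. The ITVSS check of Step~C guarantees that every surviving share is a genuine evaluation of a consistently shared and correctly normalized update, so a Byzantine user either behaves consistently through Step~C or is excluded; hence at most $\Byzantine$ of the $\TotalUser$ evaluations entering Step~D are arbitrarily corrupted and at most $\Dropout$ are missing. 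Reading these as a Reed--Solomon codeword, unique decoding of the uncorrupted aggregation polynomial succeeds whenever the number of responding users minus the erasures exceeds the degree plus twice the number of errors, i.e.\ $\TotalUser-\Dropout\ge(\approdegree+2)(\partition+\Colluding-1)+1+2\Byzantine$, which is precisely the hypothesis $\TotalUser\ge 2\Byzantine+(\approdegree+2)(\partition+\Colluding-1)+\Dropout+1$; evaluating the decoded polynomial at $\beta_1,\dots,\beta_{\partition}$ returns the intended FLTrust-style aggregate $\aggresult$, whose convergence is treated in \cref{thm:convergence}.

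For the \emph{complexity bounds} (item~3), I would tally the costs of Steps~B--E. Step~B costs each user $O(\dimension\TotalUser/\partition)$ scalars to distribute LCC shares of a $\dimension$-dimensional update; the dominant term comes from the ITVSS consistency checks, complaint resolution, and the preprocessing of Beaver triples / zero-sharings in Steps~C--D, which, summed over the $O(\dimension/\partition)$ coordinate blocks and the $O(\TotalUser)$ dealers, accumulate to $O(\frac{\dimension}{\partition}\TotalUser^3+\TotalUser^4)$ scalars per user, whereas the federator only ever reconstructs from $O(\frac{\dimension}{\partition}\TotalUser+\TotalUser^2)$ received scalars. For computation, each interpolation or evaluation of a degree-$O(\TotalUser)$ polynomial over $\mathbb{F}_p$ costs $O(\TotalUser\log^2\TotalUser\log\log\TotalUser)$ via fast arithmetic; multiplying by the number of such operations and by the per-coordinate work yields the claimed per-user and federator computation costs.

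The step I expect to be the main obstacle is item~1: although threshold secret sharing, ITVSS, Beaver triples, and re-randomization are each individually well understood, composing them and proving that the \emph{joint} transcript leaks nothing beyond the right-hand side of \cref{eq:privacy_combine} requires carefully tracking which polynomial coefficients remain private after each step --- in particular showing that disclosing the reconstructed aggregate $\aggresult$ (equivalently, the fully reconstructed polynomial $P$) does not combine with the $\Colluding$ colluding shares or with the ITVSS authentication information to reveal anything more. A clean argument will likely build an explicit simulator for $(M_{\colset},M_0)$ from the conditioning variables and proceed by a hybrid over the protocol steps.
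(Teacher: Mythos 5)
Your overall architecture matches the paper's: threshold-secret-sharing arguments for privacy, Reed--Solomon errors-and-erasures decoding with the degree bound $(\approdegree+2)(\partition+\Colluding-1)$ yielding exactly the stated condition on $\TotalUser$, and a counting argument for the complexity. Recasting the privacy claim as a simulator/hybrid rather than the paper's direct entropy decomposition is a legitimate stylistic difference. However, two points in your privacy argument need repair. First, \ourscheme does not use Beaver triples at all --- those belong to \ourschemelo and its TTP-based initialization. In \ourscheme, multiplications are performed by multiplying shares locally (so the encoding degree grows) followed by the re-randomization of \cite{gennaro1998simplified,asharov2017full}; there are no openings of the form $s-\gamma$ in the transcript, and the messages you must account for instead are the ITVSS sub-shares generated during re-randomization. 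Your step (iii) therefore simulates messages that do not exist while omitting ones that do.

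Second, and more substantively, your step (iv) describes the federator as reconstructing a single re-randomized polynomial $P$ with $P(\beta_l)=\aggresult_l$; that is not what happens. The federator separately reconstructs the two scalars/vectors $\random\sumone$ and $\random\sumtwo$ and then forms the quotient $\aggresult=\random\sumtwo/\random\sumone$. Without further argument, learning $\sumone=\sum_i\discripoly(\langle\normalmodelupdate_0,\normalmodelupdate_i\rangle)$ would reveal information about the honest updates beyond $\aggresult$. The protocol prevents this precisely via the jointly generated uniform scalar $\random=\sum_j\random_j$: $\random\sumone$ is then independent of $\sumone$, and $\random\sumtwo$ is a deterministic function of $\aggresult$ and $\random\sumone$ (the paper writes this as $H(\random\sumtwo\mid\aggresult)=0$ after conditioning). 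Your assertion that ``whatever the federator reconstructs is a function of $\aggresult$ and independent randomness'' is the right conclusion, but it holds only because of $\random$, which your proof never invokes; as written, the simulator for $M_0$ cannot produce $\random\sumone$ and $\random\sumtwo$ from the conditioning variables alone. Adding the $\random$-masking step (and the corresponding re-randomization of $\random\sumone[i]$, $\random\sumtwo[i]$) closes this gap and brings your argument in line with the paper's.
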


\begin{proof}
    The proof is given in \cref{app:ByITFL}.
\end{proof}

\begin{table*}[tb] %
\vspace{.15cm}
\caption{Complexity Analysis with respect to the total number of users $\TotalUser$, the dimension of the model updates $\dimension$, the partitioning parameter $\partition$, and the degree $\approdegree$ of the discriminator polynomial. \vspace{-.1cm}}\label{tab:complexity}
\centering
\begin{tabular}{c|cc|cc}
            & \multicolumn{2}{c|}{Per-User}   & \multicolumn{2}{c}{Federator} \\ \hline
            & Computation & Communication & Computation & Communication \\ \hline
        BREA~\cite{so2020byzantine} & $O(\dimension \TotalUser\log^2\TotalUser+\dimension\TotalUser^2)$ &  $O(\dimension \TotalUser+\TotalUser^2)$ & $O((\dimension \TotalUser+\TotalUser^3)\log^2\TotalUser\log\log \TotalUser)$ & $O(\dimension \TotalUser+\TotalUser^3)$ \\ \hline 
        ByzSecAgg~\cite{jahani2023byzantine} & $O(\frac{\dimension}{\partition}\TotalUser\log^2\TotalUser+\frac{\dimension}{\partition}\TotalUser^2)$ &  $O(\frac{\dimension}{\partition}\TotalUser+\TotalUser^2)$ & $O((\frac{\dimension}{\partition}\TotalUser+\TotalUser^3)\log^2\TotalUser\log\log \TotalUser)$ & $O(\frac{\dimension}{\partition}\TotalUser+\TotalUser^3)$  \\ \hline
        ByITFL & $O((\frac{\dimension}{\partition}\TotalUser^3+\TotalUser^4)\log^2n\log\log \TotalUser)$ &  $O(\frac{\dimension}{\partition}\TotalUser^3+\TotalUser^4)$ & $O((\frac{\dimension}{\partition}\TotalUser+\TotalUser^2)\log^2\TotalUser\log\log \TotalUser)$ & $O(\frac{\dimension}{\partition}\TotalUser+\TotalUser^2)$ \\ \hline
        LoByITFL & $O((\frac{\dimension}{\partition}+\approdegree)\TotalUser)$ &  $O((\frac{\dimension}{\partition}+\approdegree)\TotalUser)$ & $O((\frac{\dimension}{\partition}+\approdegree)\TotalUser^2\log^2\TotalUser\log\log \TotalUser)$ & $O((\frac{\dimension}{\partition}+\approdegree)\TotalUser^2)$ 
        \end{tabular}
\vspace{-0.3cm}
\end{table*}

\subsection{\ourschemelo}
We introduce \ourschemelo to reduce the high communication and computation costs incurred by \ourscheme, while still ensuring Byzantine resilience and IT privacy. This reduction comes at the expense of requiring a Trusted Third Party (TTP), which, however, is only used in an initialization phase to distribute Beaver triples~\cite{beaver1992efficient} to the users and the federator. 
Beaver triples enable a multiplicative homomorphism of SecShare schemes, allowing polynomial computations on shared updates without re-randomization and without increasing the degree of the encoding polynomial of the secret sharing. 
Additionally, an additively homomorphic Message Authentication Code (MAC) is utilized to ensure integrity, preventing malicious users from performing corrupt polynomial computation~\cite{bendlin2011semi}.

\begin{thm}
    \label{thm:theorem_loby}
    Consider a federated learning setting with a TTP and $\TotalUser$ users, out of which $\Byzantine$ are Byzantine, $\Colluding$ are curious, and $\Dropout$ may drop out at any iteration with $\TotalUser \geq \Byzantine+\partition+\Colluding+\Dropout$, where $\partition$ is the number of sub-vectors into which each local update vector is partitioned.
    \ourschemelo guarantees the following:
    \begin{enumerate}[label=\arabic*), leftmargin=1.5em, itemsep=0pt, parsep=0pt, topsep=0pt]
        \item IT privacy against any $\Colluding$ users and the federator according to \cref{eq:privacy_combine} in each iteration;
        \item Resilience against $\Byzantine$ Byzantine users and robustness against $\Dropout$ dropouts;
        \item Communication cost of $O\left((\frac{\dimension}{\partition}+\approdegree)\TotalUser\right)$ scalars per user and $O\left((\frac{\dimension}{\partition}+\approdegree)\TotalUser^2\right)$ for the federator, per iteration; Computation cost of $O\left((\frac{\dimension}{\partition}+\approdegree)\TotalUser^2\log^2\TotalUser\log\log \TotalUser\right)$ per user and $O\left((\frac{\dimension}{\partition}+\approdegree)\TotalUser\right)$ operations at the federator; and
        \item Convergence as shown in \cref{thm:convergence}.
    \end{enumerate}
    
\end{thm}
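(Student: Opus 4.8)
The plan is to prove the four parts of \cref{thm:theorem_loby} in turn, reusing the high-level template of the proof of \cref{thm:theorem_by} but with two substitutions: Beaver-triple multiplication replaces re-randomization and degree growth, and an information-theoretic additively homomorphic MAC replaces ITVSS. For \textbf{privacy}, I would characterize the joint view of the federator and the $\Colluding$ colluding users and show it is independent of $\modelupdate_{\honestset}$ conditioned on the set in \cref{eq:privacy_combine}. The view decomposes into: (i) the at most $\Colluding$ shares of each benign update, which by the $(\TotalUser,\SSthreshold,\Colluding)$-threshold property with $\SSthreshold=\partition+\Colluding$ are statistically independent of that update; (ii) the Beaver triples received from the trusted TTP, uniform and independent of all updates; (iii) the values opened during each Beaver multiplication used to evaluate the degree-$\approdegree$ discriminator polynomial and the aggregation, each of the form $s-\gamma$ for a fresh uniform $\gamma$, hence a one-time pad of $s$ and therefore uniform and independent of the updates; and (iv) the MAC key shares and opened tags, which for a one-time information-theoretically secure additively homomorphic MAC carry no information about the authenticated values beyond what is already public. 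I would then chain these by induction on the arithmetic circuit, the invariant being that after each gate the adversary's view equals a fixed function of the conditioning set plus independent uniform randomness, yielding $I = 0$ in \cref{eq:privacy_combine}.

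For \textbf{Byzantine resilience and dropout tolerance}, I would split a Byzantine user's deviation into (a) submitting a corrupted but well-formed update (correct claimed norm, consistent sharing and MAC), whose influence is down-weighted by the trust score $\trustscore_i$ output by the discriminator exactly as in FLTrust~\cite{cao2020fltrust} and absorbed by the convergence analysis; and (b) any other deviation — inconsistent shares, incorrect Beaver arithmetic, or a forged MAC — which the normalization-validation step (\textbf{Step C}) together with MAC verification detects except with probability $O(1/p)$, so that the offending user's shares are discarded. Hence at most $\Byzantine+\Dropout$ of the $\TotalUser$ shares are missing or invalid, leaving $\TotalUser-\Byzantine-\Dropout \ge \partition+\Colluding = \SSthreshold$ valid ones, which suffice for the federator to reconstruct the aggregate by Lagrange interpolation. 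The federator therefore recovers exactly the FLTrust-style aggregate $\sum_i \trustscore_i\,\normalmodelupdate_i$ over the updates that passed the checks, reducing resilience to the convergence claim.

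For the \textbf{complexity}, I would tally per-iteration costs: each user secret-shares one length-$\dimension/\partition$ block to all $\TotalUser$ users and handles $O(\approdegree)$ additional field elements for the Beaver multiplications realizing the degree-$\approdegree$ discriminator, plus the corresponding MAC shares, which gives $O((\dimension/\partition+\approdegree)\TotalUser)$ communication per user and $O((\dimension/\partition+\approdegree)\TotalUser^2)$ at the federator (which collects from all users); the per-user computation is dominated by fast polynomial interpolations on the $\TotalUser$ received share-blocks, contributing the $\log^2\TotalUser\log\log\TotalUser$ factor, while the federator performs only linear aggregation over the received shares. Finally, \textbf{convergence} (part 4) is immediate from \cref{thm:convergence}, since each iteration of \ourschemelo computes precisely the FLTrust-type aggregate on the normalized and quantized updates that the convergence theorem analyzes.

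I expect the privacy argument to be the main obstacle. The delicate point is that the released aggregate $\aggresult$ lies in the conditioning set of \cref{eq:privacy_combine}, so the induction must show that the sequence of opened values remains independent of $\modelupdate_{\honestset}$ even after revealing $\aggresult$; this requires arguing that $\aggresult$ is itself recoverable from the opened values, the adversary's shares, and the public MACs, so it adds nothing new — and ruling out subtle correlations introduced across the many Beaver openings and MAC checks is exactly the kind of bookkeeping that is easy to get wrong. The resilience and complexity claims are comparatively mechanical once MAC soundness and the threshold count $\TotalUser \ge \Byzantine+\partition+\Colluding+\Dropout$ are in place.
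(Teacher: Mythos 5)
Your privacy, resilience, and convergence arguments follow essentially the same route as the paper's proof: the view of the $\Colluding$ colluding users and the federator is decomposed into sub-threshold shares, Beaver-triple openings of the form $s-\gamma$ (one-time pads), and MACs randomized by fresh $\beta$'s; Byzantine deviations are split into corrupt-but-well-formed updates (absorbed by the trust score and the convergence analysis) versus malformed computations (caught by the normalization check and the IT MAC with forgery probability $1/p$); and the threshold count $\TotalUser-\Byzantine-\Dropout\ge\partition+\Colluding$ gives reconstruction by Lagrange interpolation. Your flagged concern about $\aggresult$ sitting in the conditioning set is exactly the point the paper handles by noting that $\random\sumone$ is uniformized by $\random$ and that $\random\sumtwo$ is determined by $\aggresult$ and $\random\sumone$, so nothing beyond $\aggresult$ leaks.

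There is, however, a concrete error in your complexity accounting. You attribute the $\log^2\TotalUser\log\log\TotalUser$ factor to \emph{per-user} ``fast polynomial interpolations on the $\TotalUser$ received share-blocks,'' but in \ourschemelo users never interpolate the shares they receive --- doing so would reconstruct other users' secrets and defeat privacy. Each user only performs local additions and scalings on its own shares: sharing a scalar costs $O(1)$ via \cref{eq:sharing}, and each Beaver multiplication costs $O(1)$. It is the \emph{federator} that reconstructs the opened values $s-\gamma$, $v-\omega$, the norms, and $\random\sumone,\random\sumtwo$ by Lagrange interpolation at $O(\TotalUser\log^2\TotalUser\log\log\TotalUser)$ per scalar, which is where the polylogarithmic factor belongs (this is also how \cref{tab:complexity} reports it). Your tally appears to have been reverse-engineered from the theorem statement's labeling of the two computation costs, which is inconsistent with the table and the protocol; the correct derivation places the interpolation cost at the federator and leaves the users with purely linear share arithmetic.
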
 %

\begin{proof}
    The proof is given in \cref{app:LoByITFL}.
\end{proof}

\subsection{Convergence for \ourscheme and \ourschemelo}
We denote by $F(\model)=\mathbb{E}[F(\model,\Dataset)]$ the expectation of the loss on the union of the users' datasets. We make the same assumptions as in~\cite[Assumptions 1, 2 and 3]{cao2020fltrust}:
\begin{enumerate*}[label=(\emph{\alph*})]
    \item The expected loss $F(\model)$ is $\mu$-strongly convex and differentiable with $L$-Lipschitz continuity, and the loss function $F(\model, \Dataset_i)$ is $L_1$-Lipschitz probabilistically with parameter $\delta$.
    \item The gradient $\nabla F(\model, \Dataset_i)$ of the loss function at the optimal global model, and the gradient difference between the loss with the optimal global model and the loss with any $\model$ are bounded. Specifically certain inner products with an arbitrary unit vector are sub-exponential with parameters $\sigma_1$ and $\Lambda_1$ in the former case, and $\sigma_2$ and $\Lambda_2$ in the latter case.
    \item Each user's local dataset $\Dataset_i$ for $i \in \TotalUser$ and the root dataset $\Dataset_0$ are sampled independently from the same distribution.
\end{enumerate*}

\begin{thm}\label{thm:convergence}
    Given the assumptions above, \ourscheme and \ourschemelo converge, i.e., the difference between the global model $\model^{(\globalIteration)}$ obtained by the federator after $\globalIteration$ global iterations and the optimal global model $\model^\star\define\argmin F(\model)$ obtained in the case where all users would have been honest is bounded by %
    \begin{equation*}
        \| \model^{(\globalIteration)} - \model^\star \| \leq (1-\Gamma)^\globalIteration \| \model^{(0)}-\model^\star\|+12\learningRate \Delta_1/\Gamma,
    \end{equation*}
    where $\learningRate$ is the global learning rate, 
    \begin{align*}
        \Gamma \,\; & = 1-(\sqrt{1-\mu^2/4L^2}+24\learningRate\Delta_2+2\learningRate L),\\
        \Delta_1 & = \sigma_1\sqrt{\frac{2}{|D_0|}\left(\dimension\log6+\log(\frac{3}{\delta})\right)},\\
        \Delta_2 & =\!\! \sqrt{\frac{2 \sigma_2^2}{|D_0|}\left(\! \dimension\log\frac{18L_2}{\sigma_2} \!+ \! \frac{\dimension}{2}\log\frac{|D_0|}{\dimension} \! +\! \log\frac{6\sigma_2^2 z \sqrt{|D_0|}}{\Lambda_2\sigma_1\delta}\right)},
    \end{align*}%
    $|D_0|$ is the size of the root dataset, $L_2\!=\!\max \{ L, L_1 \}$ and $z$ is a positive integer satisfying $\| \model-\model^*\| \leq z \sqrt{\dimension}$ for all $\model \in \mathbb{R}^\dimension$.
\end{thm}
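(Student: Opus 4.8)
The plan is to show that the aggregated update $\modelupdate^{(\globalIteration)}$ produced by \ourscheme and \ourschemelo is, in each iteration, close enough to the gradient $\nabla F(\model^{(\globalIteration)})$ of the expected loss that one can reuse the FLTrust convergence analysis of~\cite{cao2020fltrust} essentially verbatim. The first step is to establish that, from the convergence point of view, the polynomial discriminator function we designed plays the same role as the ReLU-clipped cosine-similarity trust score of FLTrust: it assigns nonnegative weights to the users, it down-weights updates whose direction disagrees with the federator's reference update $\modelupdate_0^{(\globalIteration)}$, and it is insensitive (after normalization) to the magnitude of the users' updates. I would phrase this as a lemma stating that the normalized, trust-scored aggregate satisfies the same deterministic bound that~\cite[proof of Theorem~1]{cao2020fltrust} derives for $\AGG$, i.e., that $\langle \modelupdate^{(\globalIteration)} - \nabla F(\model^{(\globalIteration)}), \model^{(\globalIteration)} - \model^\star\rangle$ and $\|\modelupdate^{(\globalIteration)} - \nabla F(\model^{(\globalIteration)})\|$ obey the bounds driving their recursion, with the Byzantine users' contributions controlled by the trust score being small (or zero) whenever their direction is sufficiently misaligned.

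Second, I would invoke Assumptions (a)--(c): $\mu$-strong convexity and $L$-Lipschitz smoothness of $F(\model)$, the $L_1$-Lipschitz-with-probability-$\delta$ property of $F(\model,\Dataset_i)$, the sub-exponential concentration of the relevant inner products with parameters $(\sigma_1,\Lambda_1)$ and $(\sigma_2,\Lambda_2)$, and the i.i.d.\ sampling of $\Dataset_0, \Dataset_1, \dots, \Dataset_n$ from a common distribution. Using these, a union bound over an $\epsilon$-net of the unit sphere in $\mathbb{R}^\dimension$ (this is where the $\dimension\log 6$ and $\frac{\dimension}{2}\log\frac{|D_0|}{\dimension}$ terms and the net resolution enter) gives, with high probability, uniform control of $\|\nabla F(\model) - \frac{1}{|D_0|}\sum_{x\in D_0}\nabla F(\model,x)\|$ over all $\model$, yielding exactly the quantities $\Delta_1$ and $\Delta_2$ as defined in the statement; here $z$ with $\|\model - \model^\star\|\le z\sqrt{\dimension}$ bounds the diameter of the region over which the net argument is applied. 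Then the one-step contraction
\[
    \|\model^{(\globalIteration+1)} - \model^\star\| \le \bigl(\sqrt{1-\mu^2/4L^2} + 24\learningRate\Delta_2 + 2\learningRate L\bigr)\|\model^{(\globalIteration)} - \model^\star\| + 12\learningRate\Delta_1
\]
follows, and unrolling the recursion over $\globalIteration$ iterations with $\Gamma = 1-(\sqrt{1-\mu^2/4L^2} + 24\learningRate\Delta_2 + 2\learningRate L)$ and summing the geometric series gives the claimed bound $\|\model^{(\globalIteration)} - \model^\star\| \le (1-\Gamma)^\globalIteration\|\model^{(0)} - \model^\star\| + 12\learningRate\Delta_1/\Gamma$.

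The main obstacle is the first step: FLTrust's guarantee is stated for its specific ReLU$\circ$cosine trust score, and our discriminator is a different function (a low-degree polynomial approximation chosen for secret-sharing compatibility, and operating on quantized, finite-field embeddings of the updates). I need to argue that the properties actually used in~\cite{cao2020fltrust}---nonnegativity of weights, direction-sensitivity, normalization-invariance, and a Lipschitz-type bound on how the trust score reacts to perturbations of the input direction---are all inherited by our construction, and that the quantization and finite-field wrap-around introduce only a negligible additive error that can be folded into $\Delta_1$ (or made arbitrarily small by choosing the field size and quantization resolution appropriately). A secondary subtlety is that Byzantine users, who may report arbitrary (but validated-normalization) updates, could attempt to obtain a spuriously large trust score; I would need to show the discriminator polynomial is designed so that such updates either get weight zero or contribute a bounded bias that the contraction can absorb, which is precisely the design constraint flagged in the Main Results section. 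Once that reduction is in place, the remainder is a routine transcription of the FLTrust proof, so I would state it as a lemma and defer the detailed net/concentration computations to~\cite{cao2020fltrust}.
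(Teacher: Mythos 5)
Your overall skeleton matches the paper's: establish a deterministic per-iteration bound on $\| \modelupdate^{(\globalIteration)} - \nabla F(\model^{(\globalIteration)}) \|$, then import the $\epsilon$-net/sub-exponential concentration machinery of FLTrust to get $\Delta_1$, $\Delta_2$, the one-step contraction, and the unrolled geometric series. That part is fine and is exactly how the paper defers to \cite[Appendix A]{cao2020fltrust}.

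The gap is in how you propose to obtain the deterministic bound. You want to argue that the polynomial discriminator ``assigns nonnegative weights,'' down-weights misaligned updates, and therefore yields the same convex-combination-type bound on the aggregate as FLTrust's $\ReLU\circ\cos$ score. This premise is false for the scheme as designed: the chosen degree-$3$ polynomial $\discripoly$ takes negative values on part of $[-1,0)$ (this is deliberate, see the discussion in \cref{sec:discriminator}), so the trust scores are not nonnegative, $\aggresult$ is not a convex combination of the normalized updates, and $\|\aggresult\|$ is not controlled by $\|\normalmodelupdate_0\|$. Your lemma as stated would therefore not go through, and the ``Lipschitz-type bound on how the trust score reacts to perturbations'' you flag as the main obstacle is never needed. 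The paper's actual argument (\cref{lem:bounded}) uses no property of $\discripoly$ whatsoever: the post-processing step \cref{eq:modelupdate} rescales $\aggresult$ to have $\|\modelupdate\| = \|\modelupdate_0\|$ and flips its sign so that $\langle \modelupdate, \modelupdate_0\rangle \geq 0$, after which
\begin{align*}
\| \modelupdate - \nabla F(\model) \| &\leq \| \modelupdate - \modelupdate_0 \| + \| \modelupdate_0 - \nabla F(\model) \| \leq \| \modelupdate + \modelupdate_0 \| + \| \modelupdate_0 - \nabla F(\model) \| \\
&\leq 2\|\modelupdate_0\| + \| \modelupdate_0 - \nabla F(\model) \| \leq 3\| \modelupdate_0 - \nabla F(\model) \| + 2\|\nabla F(\model)\|
\end{align*}
by the triangle inequality and the fact that $\langle \modelupdate,\modelupdate_0\rangle\ge 0$ implies $\|\modelupdate-\modelupdate_0\|\le\|\modelupdate+\modelupdate_0\|$. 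This is the bound fed into FLTrust's recursion, and it holds for an arbitrary (even adversarially corrupted) $\aggresult$, which is also why Byzantine contributions and quantization effects need no separate treatment in the convergence proof. Your secondary concerns (folding quantization error into $\Delta_1$, bounding the bias from Byzantine users with spuriously large trust scores) are thus moot for this theorem; the discriminator's design only matters for the quality of the learned model, not for the stated contraction bound.
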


\begin{proof}
    The proof is given in \cref{proof:convergence}.
\end{proof}

In \cref{sec:ByITFL} and \cref{sec:LoByITFL}, we present \ourscheme and \ourschemelo in detail, respectively. We compare the communication and computation complexity of \ourscheme with respect to $\TotalUser$, $\dimension$ and $\partition$ to BREA~\cite{so2020byzantine} and ByzSecAgg~\cite{jahani2023byzantine} in \cref{tab:complexity}.

\section{\ourscheme}\label{sec:ByITFL}
We explain the main steps of \ourscheme followed in each global iteration. In this description, we omit the current global iteration index $\globalIteration$ for clarity of presentation.

\subsection{Normalization and Quantization}
\label{subsec: normal}
To defend against Byzantine attacks performed on the magnitude of the local update vector, the federator, and all users first normalize $\modelupdate_i$ to a unit vector $\realnormalmodelupdate_i$,
\[ %
    \realnormalmodelupdate_i \define \frac{\modelupdate_i}{\| \modelupdate_i \|},  \forall i \in \{0,1,\cdots,\TotalUser\}.
\] %
This prevents Byzantine users from influencing the model by sending extremely large/small local updates.

Since the training process is performed in the real domain and LCC (like every IT private SecShare) works over finite fields, it is essential to transfer $\realnormalmodelupdate_i \in \mathbb{R}^{\dimension}$ to vectors in a finite field $\normalmodelupdate_i \in \mathbb{F}_p^\dimension$, where $p$ is a large prime or power of a prime. Therefore, users apply an element-wise stochastic quantizer $Q_q(x)$ with $2q+1$ quantization intervals as in \cite{so2020byzantine}, \cite{jahani2023byzantine}:
\begin{equation*}
    Q_q(x) = 
    \left\{
    \begin{array}{ll}
          \frac{\lfloor qx \rfloor}{q}   & \text{with prob. } 1-(qx-\lfloor qx \rfloor), \\
          \frac{\lfloor qx \rfloor+1}{q} & \text{with prob. } qx-\lfloor qx \rfloor,
    \end{array} 
    \right. 
\end{equation*} 
where $q$ is the number of quantization levels. The relation between $p$ and $q$ is explained later. Note that the stochastic quantization is unbiased, i.e., $\mathbb{E}_Q[Q_q(x)]=x$. Let $\Phi(x) \define x \mod p$ be the function mapping integers to values in $\mathbb{F}_p$. The quantization is defined as
$
    \normalmodelupdate_i \define \Phi(q\cdot Q_q(\realnormalmodelupdate_i)).
$

\subsection{Sharing of the Normalized Model Updates}
The normalized update vectors are partitioned into smaller sub-vectors and secret-shared using LCC and ITVSS. The partition of $\normalmodelupdate_i$ into $\partition$ smaller subvectors is done as
\begin{equation*}
    \normalmodelupdate_i = [\normalmodelupdate_{i1}^T,\normalmodelupdate_{i2}^T,\cdots,\normalmodelupdate_{i\partition}^T]^T, \forall i \in \{ 0,1,\cdots,\TotalUser\},
\end{equation*}
where each sub-vector is of size $\frac{\dimension}{\partition}$ and $\partition \leq \frac{\TotalUser-\Dropout-1}{\approdegree+2}-\Byzantine-\Colluding+1$.

We assume $\modelupdate_0$ does not require privacy. The federator broadcasts $\normalmodelupdate_0$ to the users. Each user $i$ secret-shares $\normalmodelupdate_i$ with all users by LCC with the degree-$(\partition+\Colluding-1)$ encoding polynomial
\begin{equation}
    \begin{aligned}
        f_i(x) & = \sum_{j \in [\partition]}\normalmodelupdate_{ij} \cdot \prod_{l \in [\partition+\Colluding]\setminus \{j\}} \frac{z-\beta_l}{\beta_j-\beta_l} \\
        & + \sum_{j \in [\Colluding]}\bm{r}_{ij} \cdot \prod_{l \in [\partition+t] \setminus \{\partition+j \}} \frac{z-\beta_l}{\beta_{\partition+j}-\beta_l}, \forall i \in [\TotalUser],
    \end{aligned}
\end{equation}
where $\beta_1, \cdots, \beta_{\partition+t}$ are $\partition+t$ distinct non-zero elements from $\mathbb{F}_p$ and $\bm{r}_{ij}$'s are chosen independently and uniformly at random from $\mathbb{F}_p^{\dimension/\partition}$. Note that the finite field size $p$ should be large enough to avoid any wrap-around as we describe in \cref{sub:aggregation}. Each user $j \in [n]$ receives a secret share of $\normalmodelupdate_i$ from user $i \in [n]$, i.e., $\normalmodelupdate_{i}[j]=f_i(\alpha_j)$ where $\alpha_1, \cdots, \alpha_{\TotalUser}$ are distinct non-zero element from $\mathbb{F}_p$ that satisfy $\{ \alpha_l \}_{l \in [\TotalUser]} \cap \{ \beta_l \}_{l \in [\partition]}=\varnothing$.
We leverage the ITVSS protocol from \cite{BGW} to prevent Byzantine users from sending corrupt shares in the secret sharing step.

\subsection{Validation of Normalization}\label{sub:normalization}
Byzantine users may misbehave during the normalization. 
Thus, upon receiving a secret share, user $i \in [\TotalUser]$ computes $\langle \normalmodelupdate_j[i], \normalmodelupdate_j[i] \rangle$ for each $j\in [\TotalUser]$. Due to the linearity of secret sharing, this dot-product also corresponds to a secret share of the norm of $\normalmodelupdate_j$, i.e.,
     $\| \normalmodelupdate_j \|_2^2 [i]=  \langle \normalmodelupdate_j[i], \normalmodelupdate_j[i] \rangle$.

If each user $i$ sends the share $\| \normalmodelupdate_j \|_2^2 [i]$ to the federator, the latter can decode $\| \normalmodelupdate_j \|_2^2$ and verify whether it is equal to one. However, the encoding polynomial of those shares $\| \normalmodelupdate_j \|_2^2 [i]$ includes additional information about $\normalmodelupdate_j$. Therefore, simply sending the shares violates the IT privacy requirement. To that end, the users \emph{re-randomize}~\cite{gennaro1998simplified,asharov2017full} the share $\| \normalmodelupdate_j \|_2^2 [i]$ before sending it to the federator, which involves sub-sharing the users' secret shares using ITVSS \cite{BGW}, and linearly combining to construct the re-randomized secret shares. %
Upon receiving enough re-randomized shares $\| \normalmodelupdate_j \|_2^2 [i]$, the federator utilizes error correction decoding of the underlying RS code to reconstruct $\| \normalmodelupdate_j \|_2^2$ for each $j \in [\TotalUser]$ and checks if it is within a certain interval, i.e.,
\begin{align*}
    \left |   \| \normalmodelupdate_j \|_2^2 - \Phi(q\cdot Q_q(1))^2 \right | < \varepsilon \cdot q^2,
\end{align*}
where $\varepsilon$ is a predefined threshold and can be set empirically. %
The interval is caused by the accuracy loss due to quantization. If any user does not pass the normalization check, the federator marks them as Byzantine and excludes them from future computations. Note that decoding the RS code for error correction requires $\TotalUser \geq 2\Byzantine+2(\partition+\Colluding-1)+\Dropout+1$. 

\subsection{Users Secure Computation}\label{sub:computation}
Users compute the discriminator function on the secret shares of the model updates, and construct secret shares of the aggregation result. 

This step is based on the non-private scheme FLTrust~\cite{cao2020fltrust}, which assigns to each user $i$ a trust score determined by the cosine similarity between the federator and the local model update, i.e., $\operatorname{TS}_i = \operatorname{ReLU}\left(\cos(\theta_i)\right)$, where $\theta_i$ is the angle between $\modelupdate_i$ and $\modelupdate_0$. $\operatorname{ReLU}$ is the discriminator function in FLTrust. The federator then scales the local model updates by their trust scores and averages them for aggregation.

Making FLTrust IT private is not straightforward, which is why we replace ReLU by a degree-$\approdegree$ polynomial $\discripoly(x)=h_0+h_1x+ \cdots + h_kx^{\approdegree}$ as the discriminator function. The choice of the discriminator function will be discussed later in \cref{sec:discriminator}.
Therefore, the trust score for each user becomes
$
    \trustscore_i = \discripoly(\cos(\theta_i))=\discripoly(\langle \normalmodelupdate_0,\normalmodelupdate_i \rangle), \forall i \in [\TotalUser], 
$
and the aggregation is
\begin{equation}\label{eq:aggre}
\begin{aligned}
    \aggresult & =\frac{1}{\sum_{i\in [\TotalUser]}{\trustscore_i}} \cdot \sum_{i\in [\TotalUser]}{(\trustscore_i \cdot \normalmodelupdate_i)}
    \define \frac{\sumtwo}{\sumone}, 
\end{aligned}
\end{equation}
\begin{equation*}
\begin{aligned}
    \text{with } \sumone & \!\define\! \sum_{i\in [\TotalUser]}{\discripoly(\langle \normalmodelupdate_0,\normalmodelupdate_i \rangle)} \text{ and }
    \sumtwo \!\define\! \sum_{i\in [\TotalUser]}{\left(\discripoly(\langle \normalmodelupdate_0,\normalmodelupdate_i \rangle) \cdot \normalmodelupdate_i\right)}.
\end{aligned}
\end{equation*}

The federator needs to compute $\aggresult$ in a private manner without learning individual users' private information beyond this quotient. The colluding users should learn nothing about other honest users' data during the computation. Both $\sumone$ and $\sumtwo$ are polynomial functions of the model updates $\normalmodelupdate_0$ and $\normalmodelupdate_i$ for $i \in [\TotalUser]$, where $\normalmodelupdate_0$ is known to all users and $\normalmodelupdate_i$'s are secret-shared among the users using LCC. Since LCC allows the computation of an arbitrary polynomial $h$ over its secret, as described in \cref{sec:LCC}, users compute the polynomials $\sumone$ and $\sumtwo$ on their secret shares, such that each of them obtains an evaluation of $\sumone$ and $\sumtwo$. This guarantees that any set of up to $\Colluding$ users are not able to learn anything from the shares. 

Privacy against the federator has not yet been guaranteed: if the federator were to reconstruct $\sumone$ and $\sumtwo$ directly, some information about $\normalmodelupdate_1, \cdots, \normalmodelupdate_\TotalUser$ would leak to the federator. 
LCC, like other threshold SecShare schemes, is additively homomorphic, but not multiplicatively.
We follow the re-randomization from \cite{gennaro1998simplified,asharov2017full} to construct the re-randomized secret shares. 

The users now hold the re-randomized shares $\sumone[i]$ and $\sumtwo[i]$. 
It remains to ensure that the federator obtains the quotient $\sumtwo/\sumone$ without gaining any additional information about $\sumone$ and $\sumtwo$. 
To this end, each user $i$:
\begin{enumerate*}[label=\emph{(\alph*)}] 
\item chooses an independent value $\random_i$ uniformly at random from $\mathbb{F}_p$ and secret-shares it by LCC and ITVSS among all users; 
\item adds the shares of $\random_j$'s from all user $j$ and obtains its share of\footnote{The case $\random=0$ can be avoided by minor changes, omitted for brevity.}  $\random = \sum_{j=[\TotalUser]}{\random_j}$; and
\item multiplies $\sumone[i]$ and $\sumtwo[i]$ by $\random[i]$ and performs re-randomization to obtain $\random \sumone[i]$ and $\random \sumtwo[i]$.
\end{enumerate*} 
Users send the secret shares $\random \sumone[i]$ and $\random \sumtwo[i]$ to the federator.

\subsection{Private Aggregation}\label{sub:aggregation}
The federator receives secret shares of the aggregation from the users to reconstruct the private aggregation by decoding an error correcting code and updates the global model. 

The federator receives $(\random \sumone)[i]$ and $(\random \sumtwo)[i]$, for which the degree of the encoding polynomial is $(\approdegree+1)(\partition+\Colluding-1)$ and $(\approdegree+2)(\partition+\Colluding-1)$, respectively.
With sufficient number of users returning evaluations, the federator is able to leverage the error correction capability of RS codes \cite{mceliece1981sharing} to decode $\random \sumone$ and $\random \sumtwo$. Therefore, we require $\TotalUser \geq 2\Byzantine+(\approdegree+2)\cdot (\partition+\Colluding-1)+\Dropout+1$. 

Upon decoding the correct values, the federator computes $\aggresult\!=\!\frac{\random \sumtwo}{\random \sumone}$,  converts it from the finite field back to the real domain through de-quantizing by $Q_q(x)^{-1}$ and demapping by $\Phi^{-1}$. The federator then checks whether $\aggresult$ and $\modelupdate_0$ point to the same general direction, computes the model update
\begin{align}\label{eq:modelupdate}
    \modelupdate =
    \left\{
    \begin{array}{ll}
          \| \modelupdate_0\| \cdot \frac{\aggresult}{\| \aggresult\|},  & \text{if } \langle \aggresult,\modelupdate_0 \rangle  \geq 0, \\ 
          - \| \modelupdate_0\| \cdot \frac{\aggresult}{\| \aggresult\|},  & \text{if } \langle \aggresult,\modelupdate_0 \rangle  < 0,
    \end{array} 
    \right. 
\end{align}
and updates the global model for the next iteration.
To ensure the correctness of the result, none of the computations should cause a wrap-around in the finite field. Each entry of the normalized gradient is in the range $-q$ to $q$, hence the dot product is in the range $-dq^2$ to $\dimension q^2$. 
Thus, we require $p \geq 2\TotalUser \dimension^\approdegree q^{2\approdegree+1}+1$, where $\TotalUser$ is the total number of users, $\dimension$ is the dimension of the model updates, $q$ is the quantization parameter and $\approdegree$ is the degree of the discriminator function.

\section{\ourschemelo} \label{sec:LoByITFL}
Considering the high communication overhead required by ByITFL, we propose \ourschemelo, a Byzantine-resilient scheme with IT privacy and low communication cost. 
IT privacy is obtained by the use of threshold SecShare schemes. %
For clarity of exposition and without loss of generality, we consider an $(\TotalUser,\SSthreshold=\Colluding+1,\Colluding)$-SecShare scheme in the following, i.e., the vectors are not partitioned into sub-vectors and $\partition=1$. 

Before the training phase, we require an \textit{initialization phase} with a TTP. The TTP generates a sufficient number of Beaver triples $\gamma, \omega, \kappa$ and sends the corresponding shares $\gamma[i],\omega[i],\kappa[i]$ to the users, enabling multiplication of secret-shared messages in the training phase. In addition, it also samples sufficiently many vectors $\randomvector^1, \dots, \in \mathbb{F}_p^\dimension$ and values $\random^1, \dots, \in \mathbb{F}$ independently and uniformly at random. The $\randomvector$'s make the secret sharing step more efficient, as will be explained later. The random values $\random$'s are to compute $\aggresult$, as in \ourscheme. We omit the current global iteration index $\globalIteration$ for clarity. The numbers of random values and Beaver triples required for the training phase will be given later.
The TTP sends $\randomvector_i$, $\randomvector_j[i]$ for $j \in [\TotalUser]$, and $\random[i]$ to user $i \in [\TotalUser]$. 

\paragraph{Message Authentication Code} To prevent Byzantine users from providing corrupt computation results during the protocol, the TTP also assigns a one-time MAC~\cite{bendlin2011semi} for each secret share of the generated randomness. Suppose 
$s[i]$ is a secret share of a symbol $s \in \mathbb{F}_p$. A MAC takes a key pair $(\alpha, \beta)$ where $\alpha$ and $\beta$ are drawn independently and uniformly at random from $\mathbb{F}_p$, and is computed as $\operatorname{MAC}_{\alpha, \beta}(s[i])\define\alpha \cdot s[i]+\beta$. In our scheme, we keep $\alpha$ globally fixed and sample a new $\beta$ independently for each MAC. As a result, the MAC is additive homomorphic and linear operations $f$ can be performed on it, e.g., $f(s[i])$. Consider two parties $A$ (in our case, the user) and $B$ (in our case, the federator). 
Party $A$ holds the secret share $s[i]$ and the corresponding MAC $\operatorname{MAC}_{\alpha, \beta}(s[i])$. It computes the linear operation $f(s[i])$ and $f(\operatorname{MAC}_{\alpha, \beta}(s[i]))$, and sends the computations to party $B$. Party $B$ holds the MAC key $(\alpha, \beta)$ and wants to verify whether $f(s[i])$ is correctly computed. Upon receiving $f(s[i])$ and $f(\operatorname{MAC}_{\alpha, \beta}(s[i]))$, party $B$ computes $\operatorname{MAC}_{\alpha, \beta}(f(s[i]))$ and checks if it is consistent with $f(\operatorname{MAC}_{\alpha, \beta}(s[i]))$, which prevents the corrupted computations.
Thus, at the initialization phase, the TTP sends all the MAC keys $(\alpha, \beta)$'s to the federator, enabling integrity check of each computation on the secret shares. The federator marks users who do not pass the integrity check as Byzantine and exclude them from future computation. We omit $\alpha$ and $\beta$ later in the text for brevity. 

We denote $\{ s[i] \}$ as the secret shares of $s$ accompanied by the corresponding MAC, i.e. $\{ s[i] \}=\{ s[i], \operatorname{MAC}(s[i])\}$. Hence, after the initialization, each user $i \in [\TotalUser]$ receives a sufficient number of $\randomvector_{i}$, $\{ \randomvector_{j}[i] \}$ for $j \in [\TotalUser]$, $\{\random[i]\}$, and Beaver triples $\{\gamma[i]\}$, $\{\omega[i]\}$, $\{\kappa[i]\}$, while the federator receives all MAC keys $(\alpha, \beta)$'s used for generating the MACs of the random values. 

The \textit{training phase} of \ourschemelo also consists of the aforementioned main steps, as in \ourscheme.

\subsection{Normalization and Quantization} 
Users and the federator compute, normalize and quantize their model updates as described in \cref{subsec: normal}. The relation between $p$ and $q$ will be detailed later.

\begin{table*}[!t]
\centering
\caption{The number of random numbers and Beaver triples required per training iteration. 
We consider three different versions of Beaver triples: \textit{a)} $\gamma, \omega, \kappa$ are used for the multiplication between two scalars; \textit{b)} $\bm{\iota}, \bm{\phi}, \chi$ are used for computing the dot product of two $\dimension/\partition$-dimensional vectors, where $\bm{\iota}, \bm{\phi}\in \mathbb{F}_p^{\dimension/\partition}$ and $\chi$ is the dot product of $\bm{\iota}$ and $\bm{\phi}$; and \textit{c)} $\zeta, \bm{\xi}, \bm{\psi}$ are used for computing the element-wise multiplication of a scalar and a $\dimension/\partition$-dimensional vector, where $\bm{\xi}, \bm{\psi} \in \mathbb{F}_p^{\dimension/\partition}$ and $\bm{\psi}$ is the component-wise multiplication of $\zeta$ and $\bm{\xi}$. Each vector Beaver triple contains $\dimension/\partition$ scalar Beaver triples. The operations over vector Beaver triples can be easily generalized from scalar Beaver triples.}
\begin{tabular}{c|ccc}
                                & Notation  & Amount & Purpose (for $i \in [\TotalUser]$) \\ \hline 
\multirow{2}{*}{Random value}   & $\randomvector$  & $\TotalUser$   & secret share $\normalmodelupdate_i$ \\ \cline{2-4}
                                 & $\random$  & $1$   & hide $\sumone$ and $\sumtwo$ \\ \hline
\multirow{3}{*}{Beaver triple} & $\bm{\iota}, \bm{\phi}, \chi$ &  $\TotalUser$   & compute $\| \normalmodelupdate_i \|_2^2$\\ \cline{2-4}
                                & $\gamma, \omega,\kappa$ &  $(\approdegree-1)\TotalUser+1$  &  compute $\discripoly(\langle \normalmodelupdate_0,\normalmodelupdate_i \rangle)$ and $\random\sumone$\\ \cline{2-4}
                                & $\zeta, \bm{\xi}, \bm{\psi}$ &  $\TotalUser+1$  & compute $\discripoly(\langle \normalmodelupdate_0,\normalmodelupdate_i \rangle) \cdot \normalmodelupdate_i$ and $\random\sumtwo$ \\ \hline
\end{tabular}
\label{tbl:randomneeded}
\vspace{-0.1cm}
\end{table*}
    
\subsection{Sharing of the Normalized Model Updates}
The federator broadcasts $\normalmodelupdate_0$ to all users. 
Each user $i \in [\TotalUser]$ secret-shares $\normalmodelupdate_i$ to all other users with a threshold SecShare scheme, thus keeping it IT private while available for computations on the shares.
The secret sharing of $\normalmodelupdate_i$ is supported by the uniformly random vector $\randomvector_i$ distributed in the initialization phase. Specifically, user $i$ knows the plain value $\randomvector_i$ and each user $j \in [\TotalUser]$ has $\randomvector_i[j]$.
User $i$ broadcasts $\normalmodelupdate_i-\randomvector_i$ to all other users and user $j$ computes
\begin{equation}\label{eq:sharing}
    \{ \normalmodelupdate_i[j] \} = (\normalmodelupdate_i-\randomvector_i) + \{ \randomvector_i[j]\}.
\end{equation}
Note that this constitutes an $(\TotalUser, \Colluding+1, \Colluding)$-threshold SecShare of $\normalmodelupdate_i$ among users $j\in [\TotalUser]$ as detailed in the following.
The uniformly random vector $\randomvector_i$ is independent of $\normalmodelupdate_i$ and thus perfectly hides it by Shannon's one-time-pad~\cite{shannon1949communication}.
Once $\normalmodelupdate_i-\randomvector_i$ is public, gaining information about $\normalmodelupdate_i$ implies gaining information about $\randomvector_i$, which is only possible if the adversary has access to more than $\Colluding$ shares $\normalmodelupdate_i[j]$. Conversely, from any $\Colluding+1$ shares $\randomvector_i[j]$, it is straightforward to decode $\randomvector_i$ and thus $\normalmodelupdate_i$.
The corresponding $\operatorname{MAC}(\normalmodelupdate_i[j])$ is obtained by the additive homomorphism of the MAC. To summarize, each user $i$ obtains a secret share and the corresponding MAC of the local model update of every user $j\in [\TotalUser]$, i.e. $\{ \normalmodelupdate_j[i] \}$. 

\subsection{Validation of Normalization} 
To prevent Byzantine users from incorrectly normalizing their updates, validation is needed. Each user $i \in [\TotalUser]$ computes $ \{ \| \normalmodelupdate_j \|_2^2 [i]\} \in \mathbb{F}_p^\dimension$ by consuming $\dimension$ Beaver triples, and sends $\{ \| \normalmodelupdate_j \|_2^2[i]\}$ to the federator. The federator performs an integrity check on each $ \| \normalmodelupdate_j \|_2^2 [i]$, reconstructs $\| \normalmodelupdate_j\|_2^2 $ and checks if it lies within a certain interval. 
As the integrity check ensures the correctness of all computations, no error correction decoding~\cite{mceliece1981sharing} is needed, which is expensive in computation and requires a higher total number of users for decoding. Hence, we only require $\TotalUser \geq \Byzantine+\partition+\Colluding+\Dropout$.  Users who fail the normalization check are excluded from future computations. 
    
\subsection{Users Secure Computation}  
To compute the aggregation in  \cref{eq:aggre}, user $i$, having $\normalmodelupdate_0$, $\{ \normalmodelupdate_j[i] \}$ for $j \in [\TotalUser]$, and $\random[i]$, computes $\{ (\random \sumone)[i] \}$ and $\{ (\random \sumtwo)[i] \}$ by consuming Beaver triples and sends the computation results to the federator.

\subsection{Private Aggregation}
By receiving $\{ (\random \sumone)[i] \}$'s and $\{ (\random \sumtwo)[i] \}$'s from the users, the federator performs the integrity check to guarantee the correctness of the computation and reconstructs $\random \sumone$ and $\random \sumtwo$ by Lagrange interpolation, as long as it receives at least $\partition+\Colluding$ correct computations. Thus, we require $\TotalUser \geq \Byzantine+\partition+\Colluding+\Dropout$. After decoding, the federator computes $\aggresult= \frac{\random \sumtwo}{\random \sumone}$, converts $\aggresult$ to the real domain and follows \cref{eq:modelupdate} to compute the global model for the next iteration. Similarly, to guarantee the correctness of the reconstructed results, we require $p \geq 2\TotalUser \dimension^\approdegree q^{2\approdegree+1}+1$ to avoid any wrap-around in the finite field.

\begin{remark} 
The number of random values and Beaver triples required for the training phase (and distributed during the initialization phase) is given in \cref{tbl:randomneeded}.
\end{remark}

\begin{figure}[!t]
    \centering
    \vspace{-0.1cm}
    \resizebox{5.5cm}{!}{
    \input{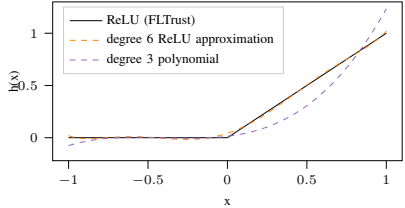}}
    \vspace{-.2cm}
    \caption{Comparison of different discriminator functions $\discripoly(x)$.}
    \vspace{-0.3cm}
    \label{fig:discriminator}
\end{figure}

\section{Experiments} \label{sec:experiments}

\subsection{Choice of the Discriminator Function}\label{sec:discriminator}
In FLTrust~\cite{cao2020fltrust}, the ReLU function is chosen as the discriminator function, whose use in IT privacy poses challenges. Specifically, the ReLU function requires comparison, which is difficult to compute under IT privacy guarantees.
The value of the comparison must remain private, since it inherently leaks information about the relative direction of local updates with the federator update. 
For this reason, we replace the discriminator function by one that can be computed in a private manner. A candidate is a polynomial approximation of the ReLU~\cite{xia2024byzantine}. However, while ReLU is plausible as a good discriminator function, there is no proof or principled reason to believe that the ReLU is the best discriminator function. 

\begin{figure}[!t]
  \centering
    \centering
    \resizebox{.75\linewidth}{!}{
    \vspace{-0.5cm}
    \input{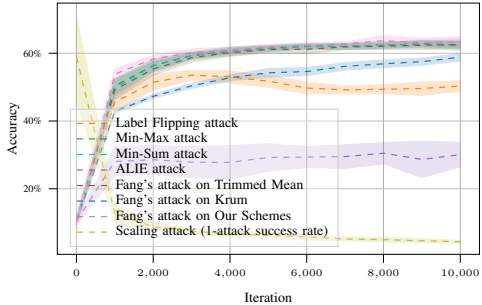}}
    \vspace{-0.2cm}
  \caption{Test accuracy (\%) of our schemes on non-i.i.d CIFAR-10 over iterations. For scaling attack, we plot ($1-$attack success rate). Baseline is $63.9\pm0.6\%$.}
  \label{fig:cifar10}
  \vspace{-0.5cm}
\end{figure}

\begin{table*}[!t]
    \centering
    \caption{Test accuracy (\%) of FLTrust~\cite{cao2020fltrust} and our proposed schemes (\ourscheme and \ourschemelo) on MNIST, Fashion MNIST and CIFAR-10 in i.i.d. and non-i.i.d. setting. For scaling attack, we give both test accuracy and attack success rate (\%). In the i.i.d. setting, baseline for MNIST is $96.2 \pm 0.1\%$, for Fashion MNIST is $88.6\pm0.1\%$, and for CIFAR-10 is $65.9\pm0.5\%$. In the non-i.i.d. setting, baseline is $96.2 \pm 0.1\%$, $88.6\pm0.2\%$, and $63.9\pm0.6\%$, respectively.}
    \label{exp:iid_noniid_combined}
    \vspace{-0.2cm}
    \renewcommand{\arraystretch}{1.2}
    \begin{tabular}{@{\hspace{0.01cm}}c@{\hspace{0.01cm}}|c|ccc|ccc}
    \multirow{2}{*}{Attack Type} & \multirow{2}{*}{Algorithm} & \multicolumn{3}{c}{Dataset (i.i.d.)} & \multicolumn{3}{|c}{Dataset (non-i.i.d.)}\\ 
    & & MNIST & Fashion MNIST & CIFAR-10 & MNIST & Fashion MNIST & CIFAR-10 \\ 
    \hline
    \multirow{2}{*}{\makecell{Label Flipping\\ Attack}} & FLTrust & $93.3\pm0.8$ & $87.6\pm0.3$ & $\bm{60.4\pm0.9}$ & $92.8\pm0.5$ & $86.8\pm0.4$ & \bm{$56.5\pm1.6$} \\ 
    & proposed &  $\bm{94.7\pm1.6}$ & $\bm{88.5\pm0.3}$ & $53.1\pm1.5$ & $\bm{94.1\pm1.3}$ & $\bm{88.1\pm0.4}$ & $50.3\pm1.5$ \\ 
    \hline
    \multirow{2}{*}{Min-Max Attack} & FLTrust &  $87.6\pm8.5$ & $86.5\pm1.2$ & $56.3\pm13.9$ & $87.5\pm5.9$ & $83.5\pm3.1$ & $59.9\pm0.8$ \\ 
    & proposed & $\bm{93.1\pm4.5}$ & $\bm{88.7\pm0.3}$ & $\bm{65.6\pm0.7}$ & $\bm{95.1\pm1.6}$ & $\bm{88.3\pm0.3}$ & $\bm{62.4\pm1.6}$ \\ 
    \hline
    \multirow{2}{*}{Min-Sum Attack} & FLTrust & $88.2\pm7.4$ & $84.6\pm3.9$ & $60.3\pm8.8$ & $85.2\pm7.3$ & $84.8\pm3.1$ & $59.4\pm0.9$ \\ 
    & proposed & $\bm{95.1\pm1.5}$ & $\bm{88.9\pm0.2}$ & $\bm{65.0\pm1.7}$ & $\bm{95.6\pm0.8}$ & $\bm{88.5\pm0.4}$ & $\bm{62.5\pm1.1}$ \\ 
    \hline
    \multirow{2}{*}{ALIE Attack} & FLTrust & $\bm{92.7\pm0.8}$ & $\bm{66.4\pm23.2}$ & $20.9\pm12.0$ & $92.6\pm0.8$ & $\bm{42.4\pm16.0}$ & $25.0\pm8.4$ \\ 
    & proposed & $92.6\pm1.8$ & $32.7\pm3.8$ & \bm{$27.1\pm12.2$} & $\bm{92.8\pm1.7}$ & $28.9\pm7.3$ & $\bm{30.1\pm3.6}$ \\ 
    \hline
    \multirow{2}{*}{\makecell{Fang’s Attack \\ on Trimmed Mean}} & FLTrust & $89.1\pm4.1$ & $84.2\pm2.0$ & $62.9\pm0.7$ & $91.6\pm1.3$ & $85.3\pm1.5$ & $59.8\pm1.0$ \\ 
    & proposed & $\bm{95.1\pm1.5}$ & $\bm{86.7\pm3.6}$ & $\bm{64.9\pm0.8}$ & $\bm{92.9\pm4.1}$ & $\bm{86.9\pm1.8}$ & $\bm{62.4\pm0.9}$ \\ 
    \hline
    \multirow{2}{*}{\makecell{Fang’s Attack\\ on Krum}} & FLTrust & $\bm{92.9\pm0.5}$ & $\bm{86.8\pm0.6}$ & $\bm{62.9\pm0.8}$ & $\bm{93.4\pm0.4}$ & $86.4\pm0.5$ & \bm{$59.1\pm1.2$} \\ 
    & proposed & $91.3\pm2.5$ & $86.7\pm1.0$ & $62.7\pm0.9^{\star}$ & $91.6\pm1.8$ & $\bm{86.8\pm0.6}$ & $58.8\pm1.2^{\star}$ \\ 
    \hline
    \multirow{2}{*}{\makecell{Fang’s Attack on\\ FLTrust/Our Schemes}} & FLTrust & $90.8\pm1.2$ & $87.5\pm0.3$ & $62.2\pm1.4$ & $91.9\pm2.0$ & $87.0\pm0.3$ & $60.2\pm1.3$ \\ 
    & proposed & $\bm{92.9\pm1.2}$ & $\bm{88.5\pm0.2}$ &  $\bm{65.6\pm1.8}$ & $\bm{93.8\pm1.4}$ & $\bm{88.6\pm0.3}$ & $\bm{63.4\pm1.5}$ \\ 
    \hline
    \multirow{2}{*}{\makecell{Scaling Attack\\ (Test Accuracy)}} & FLTrust & $93.0\pm1.1$ & $87.6\pm0.3$  & $65.2\pm0.9$ & $93.3\pm0.9$ & $87.2\pm0.6$  & $62.0\pm1.2$ \\ 
    & proposed &  $93.7\pm1.6$ & $89.0\pm0.3$  & $68.4\pm0.9$  &  $93.5\pm1.3$  &  $88.9\pm0.3$  &  $66.4\pm1.0$ \\ \hdashline
    \multirow{2}{*}{\makecell{Scaling Attack\\ (Attack Success Rate)}} & FLTrust & $\bm{61.7\pm36.6}$ &  $99.9\pm0.1$ & \bm{$92.0\pm1.1$} & $\bm{67.1\pm32.1}$ & $\bm{99.8\pm0.1}$ & \bm{$91.9\pm0.8$} \\ 
    & proposed & $66.8\pm40.7$ & $99.9\pm0.1$ & $95.2\pm0.6$ & $86.3\pm27.0$ & $99.9\pm0.1$ & $95.6\pm0.4$ \\ 
\end{tabular}
    \captionsetup{justification=justified,parskip=1pt}
    \caption*{\footnotesize * Results marked with a star are from a slightly modified version of our proposed scheme, i.e., not using \cref{eq:modelupdate}, but using $\aggresult$ directly as the model update. Using \cref{eq:modelupdate} causes the test accuracy to drop to $17.8\pm1.9\%$ and $17.9\pm2.9\%$ for i.i.d. and non-i.i.d. settings, respectively.}
    \vspace{-0.3cm}
\end{table*}

Therefore, we design our discriminator function to be a polynomial that satisfies the following core observation: 
The values of the discriminator function (TSs) for negative inputs need not be \emph{exactly} zero; it is sufficient that updates in the opposite direction of the federator's update have small values.
Further, even moderate negative values in the discriminator function for $x<0$ cannot be exploited by the adversary. Consider some $x_0>0$ such that $h(-x_0) < 0 < h(x_0)$ and $|h(-x_0)|<|h(x_0)|$. 
Then, rather than sending a model update with cosine similarity $-x_0$ the adversary can always achieve a higher TS, and thus influence the aggregate more, by flipping the direction of the model update, achieving cosine similarity $x_0$.
The direction of the model update weighted by the trust score is the same in either case.
Hence, such corrupt updates cannot harm the learning process beyond what corrupt model updates with a positive cosine similarity are capable of. 

Based on these observations, we find that it is not required to accurately approximate the ReLU function by high-degree polynomials as in~\cite{xia2024byzantine}, which is only satisfactory when the degree $\approdegree$ is greater than $6$; instead, we carefully choose a degree-$3$ polynomial that mimics ReLU in the negative half. In the positive half, we choose a more conservative shape than ReLU by giving higher trust scores to local model updates that strongly point in the same direction as the federator model update. Uncertain local model updates are attenuated more aggressively. This is very similar to~\cite{lu2023robust}, where the authors use a quadratic function on the right, and a constant $0$ on the left. As a by-product, the degree of our discriminator polynomial is decreased significantly to $3$. The chosen degree-$3$ polynomial is given by $\discripoly(x) = 0.46897526 \bm{x}^3 + 0.56578977 \bm{x}^2 + 0.1860353 \bm{x} + 0.01363545$, and plotted in \cref{fig:discriminator} alongside with the degree-$6$ polynomial of \cite{xia2024byzantine} and the ReLU function.

\begin{figure}[tb]
  \centering
    \includegraphics[width=0.43\textwidth]{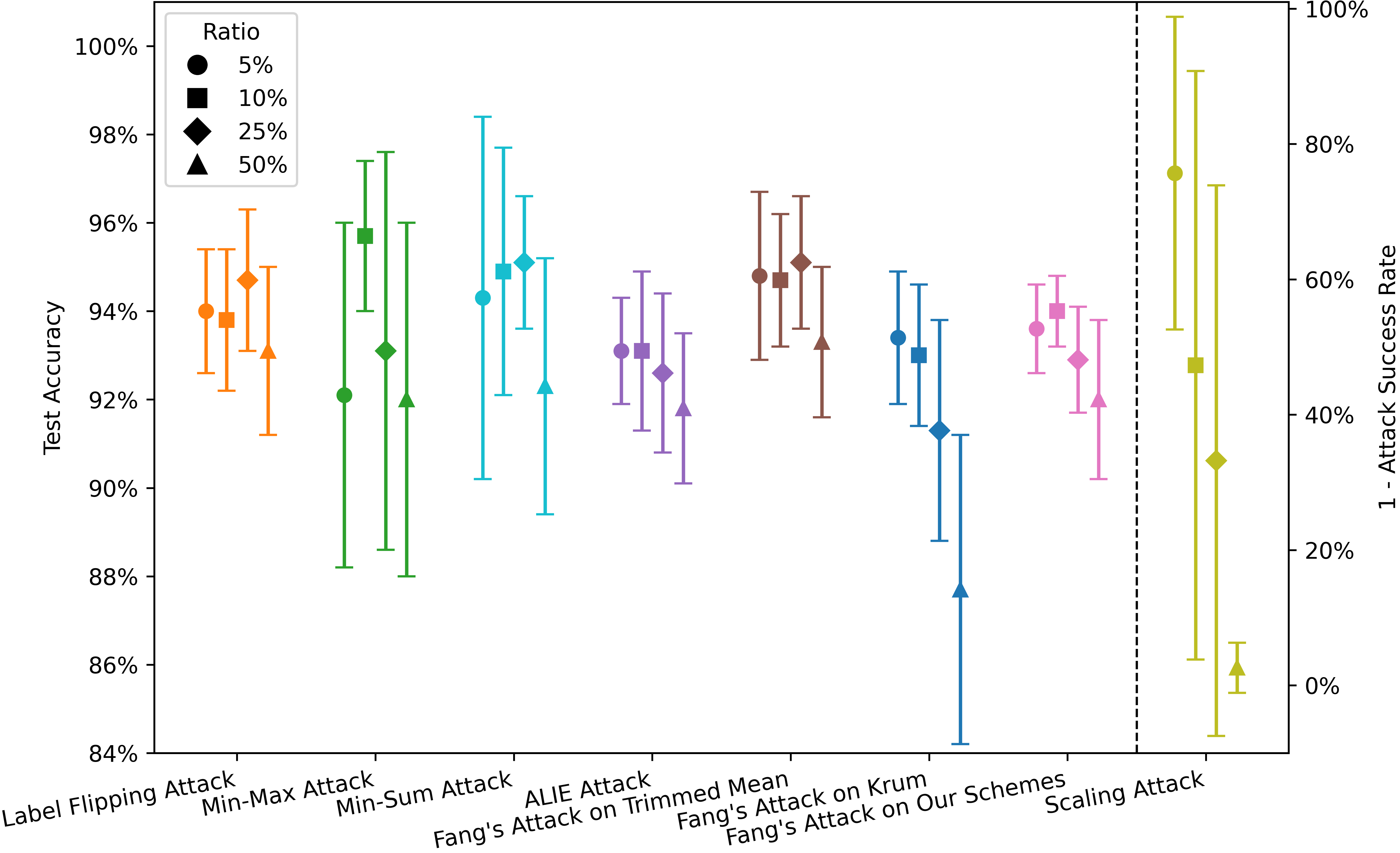}
    \caption{Test accuracy of our schemes on MNIST with $40$ users and i.i.d. setting. For scaling attack, we plot $1-$attack success rate and separate it with a dashed line. Baseline is $96.2\pm0.1\%$. }
    \vspace{-0.6cm}
  \label{fig:ratio}
\end{figure}

\subsection{Numerical Results}
We demonstrate the convergence of our algorithms, using the above polynomial as a discriminator, and compare them to FedAvg and FLTrust. Our experiments are based on the implementation provided by~\cite{gehlhar2023SAFEFL}.
We consider MNIST~\cite{deng2012mnist}, Fashion MNIST~\cite{xiao2017fashion} and CIFAR-10~\cite{krizhevsky2009learning} distributed across $\TotalUser=40$ users. On MNIST and Fashion MNIST, we train a three-layer dense neural network, and on CIFAR-10 a CNN model. The rectified linear unit function (ReLU) is used as the activation function. As loss function we choose Cross-Entropy. In each iteration, users randomly sample a minibatch of 64 samples from their local training dataset and perform local training on the minibatch. The learning rate for MNIST and Fashion MNIST is chosen to be $0.1$, and $0.01$ for CIFAR-10. 

We assume $25\%$ of the users are Byzantine ($\Byzantine=10$) and perform the Label Flipping attack, Min-Max attack~\cite{shejwalkar2021manipulating}, Min-Sum attack~\cite{shejwalkar2021manipulating}, a-little-is-enough (ALIE) attack~\cite{baruch2019little}, Fang's attack~\cite{fang2020local} on Trimmed Mean~\cite{yin2018byzantine}, Krum~\cite{blanchard2017machine} and different discriminator functions, and Scaling attack. The label flipping attack follows the same setting as in \cite{fang2020local}. Fang's attack~\cite{fang2020local} is an untargeted local model poisoning attack, optimized for different robust aggregation rules, such as Krum and Trimmed Mean. The scaling attack is equivalently known as the backdoor attack~\cite{bagdasaryan2020backdoor}. For scaling attack, we use attack success rate, i.e., the fraction of the attacker's target testing samples being predicted as the attacker's chosen label, to measure the performance of the attack, rather than using test accuracy. A lower attack success rate means a more robust defense.
As in \cite{cao2020fltrust}, we randomly split the users into $10$ groups and a training example with label $j$ is assigned to group $j$ with probability $a>0$ and to any other groups with probability $\frac{1-a}{9}$. Data are uniformly distributed to each user within the same group. Therefore, we set $a=0.1$ for i.i.d. setting and set $a=0.5$ for non-i.i.d. setting. 
We set $q=1024$, $|\Dataset_0|=100$ as in~\cite{cao2020fltrust}, and $\varepsilon=0.02$ for the normalization validation. 

The performance of our schemes using the aforementioned degree $3$ polynomial on CIFAR-10 under different attacks is shown in \cref{fig:cifar10}, and the experiment results presented in \cref{exp:iid_noniid_combined} show that our schemes are comparable to FLTrust. This demonstrates that our approach maintains the original performance of the Byzantine resilience while incorporating IT privacy. Note that, for CIFAR-10, we found that a slight adjustment to our schemes significantly improved the resilience against Fang's attack on Krum. We skip the normalization and flipping step in \cref{eq:modelupdate}, and directly use $\aggresult$ as the model update, which is exactly the algorithm proposed in our previous work~\cite{xia2024lobyitfl}. This variation is minor and does not affect generalizability, and we report the results of this variation in \cref{fig:cifar10} and \cref{exp:iid_noniid_combined}. 
\cref{fig:ratio} shows the performance of our schemes under attacks with different ratios of Byzantine users.

\section{Conclusion}
We proposed two robust aggregation schemes for FL that guarantee IT privacy. We require the federator to hold a root dataset for reference, used to scale the users' local updates during aggregation. To achieve IT privacy, we used a suitable polynomial as the discriminator function to compute trust scores and leverage secret sharing techniques to ensure privacy. We provided theoretical guarantees on privacy, resilience, convergence, and algorithmic complexity, along with empirical validation demonstrating convergence even in adversarial settings. 
While our primary contribution is to bring IT privacy with Byzantine resilience, we note that Byzantine resilience approaches based on cosine-similarity are still not fully understood, and a systematic and careful study remains an open problem. Consequently, the cosine-similarity may have additional weaknesses beyond the known weakness to backdoor attacks~\cite{lu2023robust}. Establishing meaningful theoretical guarantees for robust aggregation with less restrictive assumptions, is yet unsolved. 
Future work includes extending our approach to wireless FL environments, as explored in \cite{egger2023private,zhang2025fundamental}. 

\appendix

\subsection{Proof for \cref{thm:theorem_by}} \label{app:ByITFL}
\begin{proof}
We start by proving the IT privacy of \ourscheme.
\subsubsection{IT Privacy}\label{proof:privacy_by}
We first prove the IT privacy against any $\Colluding$ colluding users and the federator according to  \cref{eq:privacy_combine}:
\begin{align}\label{eq:proof_privacy_byitfl}
    & I(\modelupdate_{\honestset}; M_{\mathcal{T}}, M_0\!\mid \!\modelupdate_{\mathcal{T}}, \modelupdate_0, \Dataset_{\mathcal{T}}, \Dataset_0, \model, \aggresult) \nonumber \\
    = & H(\modelupdate_{\honestset}\!\mid \!\modelupdate_{\mathcal{T}}, \modelupdate_0, \Dataset_{\mathcal{T}}, \Dataset_0, \model, \aggresult) \nonumber \\
    & - H(\modelupdate_{\honestset}\!\mid \! M_{\mathcal{T}}, M_0,\modelupdate_{\mathcal{T}}, \modelupdate_0, \Dataset_{\mathcal{T}}, \Dataset_0, \model, \aggresult) \nonumber \\
    = & H(\modelupdate_{\honestset}\!\!\mid\! \!\Dataset_{\mathcal{T}},\! \Dataset_0,\! \model,\! \aggresult) \!- \!H(\modelupdate_{\honestset}\!\!\mid \!\! M_{\mathcal{T}},\! M_0,\! \Dataset_{\mathcal{T}},\! \Dataset_0,\! \model,\! \aggresult),
\end{align}
where the last equation follows because $\modelupdate_\colset$ is a deterministic function of $\Dataset_{\mathcal{T}}$ and $\model$, and $\modelupdate_0$ is a deterministic function of $\Dataset_0$ and $\model$. We then consider the exchanged messages $M_\colset$ observed by the colluding users and $M_0$ observed by the federator in each step. $M_\colset$ includes:
\begin{enumerate}[label=Step \Alph*, wide=0pt, font=\itshape,, leftmargin=*]
    \item [Step A and E:] nothing, since step A only involves local computations and step E is performed by the federator;
    \item [Step B:] shares $\normalmodelupdate_j[i]$ for $i \in \colset, j \in [\TotalUser]$; 
    \item [Step C:] ${\| \normalmodelupdate_j \|}_2^2[i]$ for $i \in \colset, j \in [\TotalUser]$ and sub-shares from the re-randomization when computing ${\| \normalmodelupdate_j \|}_2^2[i]$; and
    \item [Step D:] shares $\sumone[i], \sumtwo[i]$, $(\random \sumone)[i]$, $(\random \sumtwo)[i]$, the random values $\random_i$, and $\random_j[i]$, for $j\!\in\! [\TotalUser]$ and $i \!\in\! \colset$, and sub-shares from the re-randomization step when computing shares of $\sumone, \sumtwo, \random \sumone$ and $\random \sumtwo$.
\end{enumerate}
With regard to $M_0$, we need to consider: 
\begin{enumerate}[label=Step \Alph*, wide=0pt, font=\itshape,, leftmargin=*]
    \item [Step A and B:] nothing, since step A and B only involve computations among the users; and
    \item [Step C, D and E:] shares ${\| \normalmodelupdate_j \|}_2^2[i]$, for $j \in [\TotalUser]$, $(\random \sumone)[i]$ and $(\random \sumtwo)[i]$ received from the users for $i \in [\TotalUser]$, and the reconstruction of ${\| \normalmodelupdate_j \|}_2^2$, $\random \sumone$ and $\random \sumtwo$.
\end{enumerate}

Regarding $M_\colset$, we leverage the privacy guarantees of LCC \cite{yu2019lagrange}, the re-randomization step \cite{gennaro1998simplified}, \cite{asharov2017full} and ITVSS \cite{BGW}. Since each secret-sharing is encoded by a degree $\partition+\Colluding-1$ polynomial, when considering at most $\Colluding$ colluding users, the shares and the sub-shares observed by the colluding users are completely random and independent of $\modelupdate_\honestset, \Dataset_\colset, \Dataset_0, \model$ and $\aggresult$. Since the random values are drawn uniformly and independently from $\mathbb{F}_p$, $\random_i$'s are independent of $\modelupdate_\honestset, M_0, \Dataset_\colset, \Dataset_0, \model$ and $\aggresult$. However, the shares ${\| \normalmodelupdate_j \|}_2^2[i]$, $(\random \sumone)[i]$ and $(\random \sumtwo)[i]$ for $i \in \colset, j \in [\TotalUser]$ are not independent of $M_0$, rather, $M_0$ includes these shares. We denote these shares as $\hat{M}_{\colset}$, thus, we obtain $H(\hat{M}_{\colset}\!\mid\!M_0)=0$. Hence,
\begin{align}
\label{eq:mt}
    H(\modelupdate_{\honestset} \!\! \mid \!\! M_{\mathcal{T}},\! M_0,\! \Dataset_{\mathcal{T}},\! \Dataset_0,\! \model,\! \aggresult) \nonumber & \!\!=\!\! H(\modelupdate_{\honestset}\!\!\mid\!\! \hat{M}_{\colset},\! M_0,\! \Dataset_{\mathcal{T}},\! \Dataset_0,\! \model,\! \aggresult) \nonumber \\
    =& H(\modelupdate_{\honestset} \!\!\mid\!\!  M_0,\! \Dataset_{\mathcal{T}},\! \Dataset_0,\! \model,\! \aggresult),
\end{align}
where the first equality holds because of the independence, and the second equality holds because $H(\hat{M}_{\colset} \mid M_0)=0$. 

As for $M_0$, since the collection of all the shares from user $i \in [\TotalUser]$ is informationally equivalent to the secret itself, we only need to consider the privacy of the reconstructions. Among the reconstructions: \begin{enumerate*}[label=(\emph{\alph*})]
    \item $\| \normalmodelupdate_i \|_2^2$ lies within a certain range for all possible model updates (ideally equivalent to one);
    \item $\sumone$ is completely hidden due to the randomness $\random$; and
    \item no information is leaked beyond $\aggresult=\frac{\sumtwo}{\sumone}$, i.e., $H(\random\sumtwo|\aggresult)=0$.
\end{enumerate*}
We denote all the exchanged messages except $\random\sumtwo$ as $\hat{M}_0$, thus,
\begin{align*}
    H(\modelupdate_{\honestset}\!\!\mid \!\! M_0, \Dataset_{\mathcal{T}}, \Dataset_0, \model, \aggresult)& =H(\modelupdate_{\honestset}\!\!\mid \!\! \hat{M}_0, \random\sumtwo, \Dataset_{\mathcal{T}}, \Dataset_0, \model, \aggresult) \\
    &= H(\modelupdate_{\honestset}\!\!\mid \!\! \Dataset_{\mathcal{T}}, \Dataset_0, \model, \aggresult),
\end{align*}
where the second equation holds since $\hat{M}_0$ is completely random and independent of other terms, and $H(\random\sumtwo|\aggresult)=0$.
By substituting the above into  \cref{eq:proof_privacy_byitfl} we prove  \cref{eq:privacy_combine}, showing that \ourscheme is IT private against $\Colluding$ colluding users and the federator.

\subsubsection{Resilience against Byzantine and Robustness against dropout}\label{proof:byz_by}
Byzantine users may present corrupt updates, and perform the required computations dishonestly. The effect of corrupt updates is mitigated by the robust aggregation rule, i.e. FLTrust~\cite{cao2020fltrust}. Next, we guarantee the correct computation of $\random\sumone[i]$ and $\random\sumtwo[i]$. For dishonest computations, Byzantine users may behave arbitrarily in any step during the computation. Particularly, user $i\in\byzset$ can: 
\begin{enumerate*}[label=(\emph{\alph*})]
    \item incorrectly normalize local model updates $\realnormalmodelupdate_i$ in the normalization step;
    \item distribute invalid secret shares, i.e. secret shares encoded using different encoding polynomials, in the secret sharing step; and,
    \item misbehave when computing $ \| \normalmodelupdate_j \|_2^2 [i]$, $ (\random\sumone)[i] $ and $(\random\sumtwo)[i] $, and send corrupt results to the federator. 
\end{enumerate*} 

Byzantine behavior in the first scenario can be detected and identified in the normalization validation step. For all honest users, the model updates presented and normalized lie within a certain range. Thus, even if malicious users present model updates with the largest squared $l_2$-norm that will be accepted by the normalization validation, it only has limited impact since $\epsilon$ is empirically set to be very small. For the second scenario, the attack can be detected by ITVSS~\cite{BGW}, thus guaranteeing the correctness of the secret sharing scheme. 

With regard to the third scenario, the correctness is ensured by the Reed-Solomon (RS) decoding algorithm~\cite{mceliece1981sharing}. As described in \cref{sub:normalization} and \cref{sub:aggregation}, $ \| \normalmodelupdate_j \|_2^2 [i]$, $ (\random\sumone)[i] $ 
and $ (\random\sumtwo)[i] $ can be viewed as evaluations of the polynomials with degree $2(\partition+\Colluding-1)$, $(\approdegree+1) \cdot (\partition+\Colluding-1)$ and $(\approdegree+2)\cdot (\partition+\Colluding-1)$, respectively. The decoding of $ \| \normalmodelupdate_j \|_2^2$, $\random\sumone$ 
and $\random\sumtwo$ can be treated as the decoding of Reed-Solomon codes with parameters 
\begin{enumerate}
    \item $[\TotalUser, 2(\partition+\Colluding-1)+1, \TotalUser-2(\partition+\Colluding-1)]_p$;
    \item $[\TotalUser, (\approdegree+1)\! \cdot\! (\partition+\Colluding-1)+1, \TotalUser-(\approdegree+1) \!\cdot\! (\partition+\Colluding-1)]_p$;
    \item $[\TotalUser, (\approdegree+2)\!\cdot\! (\partition+\Colluding-1)+1, \TotalUser-(\approdegree+2)\!\cdot\! (\partition+\Colluding-1)]_p$;
\end{enumerate}
respectively. Note that, a $[\TotalUser,\SSthreshold,\TotalUser-\SSthreshold+1]_p$ RS code with $\Dropout$ erasures is able to decode $\lfloor \frac{\TotalUser-\SSthreshold-\Dropout}{2}\rfloor$ errors. By viewing the updates presented by Byzantine users as errors in a RS code and dropout users as erasures, we obtain three requirements on $\TotalUser$ given by the decoding of these three RS codes:
\begin{enumerate}
    \item $\Byzantine \leq \frac{\TotalUser-\left(2(\partition+\Colluding-1)+1\right)-\Dropout}{2}$;
    \item $\Byzantine \leq \frac{\TotalUser-\left((\approdegree+1) \cdot (\partition+\Colluding-1)+1\right)-\Dropout}{2}$;
    \item $\Byzantine \leq \frac{\TotalUser-\left((\approdegree+2)\cdot (\partition+\Colluding-1)+1\right)-\Dropout}{2}$;
\end{enumerate}
Hence, the federator can decode the correct computation results as long as $\TotalUser \geq 2\Byzantine+(\approdegree+2)\cdot (\partition+\Colluding-1)+\Dropout+1$, which is the largest $\TotalUser$ required by these three decodings.

\subsubsection{Complexity}
For each user sharing a single scalar, the computation complexity of encoding in LCC is $O(n\log^2 n\log\log n)$ \cite{yu2019lagrange} and that of ITVSS \cite{BGW} is $O(n^2 \log^2 n)$. Since the re-randomization step \cite{gennaro1998simplified} involves sub-sharing each secret share using ITVSS and a linear combination of the sub-shares, the computation complexity is $O(n^2\log^2 n\log\log n+n^3 \log^2 n)$. The communication cost for the ITVSS is $O(n^2)$, and $O(n^3)$ for re-randomization. With respect to the federator, the computation cost for decoding a single scalar using error-correction decoding takes $O(n\log^2 n\log\log n)$. The remaining proof follows from counting, here omitted for brevity.
\end{proof}

\subsection{Proof for \cref{thm:theorem_loby}} \label{app:LoByITFL}
\begin{proof}
We begin with the proof for IT privacy for \ourschemelo.
\subsubsection{IT Privacy}
For privacy against any $\Colluding$ colluding users and the federator according to \cref{eq:privacy_combine}, we want to prove \cref{eq:proof_privacy_byitfl}. The exchanged messages $M_{\mathcal{T}}$ are:
\begin{enumerate}[label=Step \Alph*, wide=0pt, font=\itshape,, leftmargin=*]
    \item [Initialization Step:] random vectors $ \randomvector_i$'s, shares of random values $\{ \randomvector_j[i] \}$'s for $j \in [\TotalUser]$ and $\{ \random[i] \}$'s, shares of Beaver triple $\{ \gamma[i] \}$'s, $\{ \omega[i] \}$'s, $\{ \kappa[i] \}$'s, for $i\in \colset$;
    \item [Step A and E:] nothing, since step A only involves local computations and step E is performed by the federator;
    \item [Step B:] the broadcasted values $\normalmodelupdate_j-\randomvector_j$ and the shares $\{ \normalmodelupdate_j[i] \}$, for $i \in \colset, j \in [\TotalUser]$; 
    \item [Step C:] shares $\{\| \normalmodelupdate_j \|_2^2[i]\}$ and the messages incurred when consuming Beaver triples to compute $\{\| \normalmodelupdate_j \|_2^2[i]\}$, for $i \in \colset, j \in [\TotalUser]$; and
    \item [Step D:] shares $\{\sumone[i]\}, \{\sumtwo[i]\}$, $\{(\random \sumone)[i]\}$, $\{(\random\sumtwo)[i]\}$, and $\{\random_j[i]\}$ for $j\in [\TotalUser]$ and $i \in \colset$, and the messages incurred when consuming Beaver triples to compute shares of $\sumone, \sumtwo, \random \sumone$ and $\random\sumtwo$.
\end{enumerate}
The exchanged messages $M_0$ include:
\begin{enumerate}[label=Step \Alph*, wide=0pt, font=\itshape,, leftmargin=*]
    \item [Initialization Step:] the MAC keys used for generating the MACs of the secret shares;
    \item [Step A and B:] nothing, since step A and B only involve computations among the users; and
    \item [Step C, D and E:] shares $\{ {\| \normalmodelupdate_j \|}_2^2[i] \}$ for $j \in [\TotalUser]$, $\{ (\random \sumone)[i] \}$ and $\{ (\random\sumtwo)[i] \}$ received from the user $i \in [\TotalUser]$, the reconstruction of ${\| \normalmodelupdate_j \|}_2^2$, $\random \sumone$ and $\random\sumtwo$, and the messages incurred when using Beaver triples to compute multiplications.
\end{enumerate}
Note that, when computing $\{ (sv)[i] \}$, colluding users get $\{s[i]-\gamma[i]\}$, $\{v[i]-\omega[i]\}$, $s-\gamma$ and $v-\omega$, for $i\in \colset$, and the federator gets the values of $\{s[i]-\gamma[i]\}$, $\{v[i]-\omega[i]\}$ for $i \in [\TotalUser]$ and the reconstructed values $s-\gamma$ and $v-\omega$.

We leverage the privacy guarantees of the SecShare scheme, the Beaver triple and the MAC scheme. Regarding the messages $M_\colset$, when the number of colluding users is smaller than the threshold of the secret sharing scheme, i.e. $\Colluding$, 
we have:
\begin{enumerate*}[label=(\emph{\alph*})]
    \item the shares observed by the colluding users are completely random and independent of $\modelupdate_\honestset, \Dataset_\colset, \Dataset_0, \model$ and $\aggresult$, but partially included by $M_0$;
    \item the random vectors $\randomvector$'s are random and independent of $M_0, \modelupdate_\honestset, \Dataset_\colset, \Dataset_0, \model$ and $\aggresult$;%
    \item the broadcasted values $\normalmodelupdate_j-\randomvector_j$'s are random because of one-time padding;
    \item the MACs are also random because of the randomness of $\beta$; and
    \item the messages incurred when consuming Beaver triples for multiplication are either secret shares or protected by one-time padding, thus, independent of $\modelupdate_\honestset, M_0, \Dataset_\colset, \Dataset_0, \model$ and $\aggresult$.
\end{enumerate*}
Thus, we have \cref{eq:mt} when considering the shares $\{ {\| \normalmodelupdate_j \|}_2^2[i] \}$, $\{ (\random \sumone)[i] \}$ and $\{ (\random\sumtwo)[i] \}$ for $j \in [\TotalUser]$ and $i \in \colset$ as $\hat{M}_\colset$.

As for $M_0$, the MAC keys sent from the TTP and the messages broadcasted during multiplication are random values that are independent of $\modelupdate_\honestset, \Dataset_\colset, \Dataset_0, \model$ and $\aggresult$. By following the same reasoning as in \cref{proof:privacy_by}, we conclude the IT privacy proof for \ourschemelo. 

\subsubsection{Resilience against Byzantine and Robustness against dropout}
Byzantine users may present corrupt updates, and perform the required computations dishonestly. Same as in \cref{proof:byz_by}, the effect of corrupt updates is eliminated by the robust aggregation rule. We next guarantee the correct computation of $\random\sumone[i]$ and $\random\sumtwo[i]$. For dishonest computations, Byzantine users $i\in\byzset$ may:
\begin{enumerate*}[label=(\emph{\alph*})]
    \item incorrectly normalize local model updates $\realnormalmodelupdate_i$; 
    \item compute invalid secret shares $\{ \normalmodelupdate_j[i] \}$ for any $j\in[\TotalUser]$; and,
    \item misbehave during the computation of $\{ \| \normalmodelupdate_j \|_2^2 [i]\}$, $\{ (\random\sumone)[i] \}$ 
and $\{ (\random\sumtwo)[i] \}$.
\end{enumerate*}

As stated in \cref{proof:byz_by}, Byzantine attacks in the first scenario can be detected and identified in the normalization validation step. For Byzantine attacks in the remaining scenarios, the MACs are used to enable integrity check, i.e., for each linear operation performed on the secret shares, the users also perform the same computation on the corresponding MACs. 
Since the federator receives all the corresponding MAC keys $(\alpha,\beta)$'s in the initialization phase, integrity check can be performed by the federator to guarantee the correctness of the computation. 
Therefore, the correctness of the computations entirely relies on the integrity check of the IT one-time MAC~\cite{bendlin2011semi}, where we need to consider the forgery probability of the MAC. Forgery probability is the probability that a malicious user guesses the MAC key used for generating the MAC of a specific secret share correctly, such that it can create a valid MAC for the computation result to fool the integrity check. Since in all MAC keys $(\alpha,\beta)$s', $\alpha$ is a uniformly random consistent value and $\beta$'s are chosen independently uniformly at random from the underlying prime field $\mathbb{F}_p$, which are used to protect the value of $\alpha$ when we reusing it in different MACs. The forgery probability for each scalar is the probability of guessing the $\beta$ used for generating the MAC for a specific scalar, that is $1/ \| \mathbb{F}_p \|= 1/p$, i.e. the inverse of the size of the underlying finite field. Since the finite field we choose is very large, i.e. $p \geq 2\TotalUser \dimension^\approdegree q^{2\approdegree+1}+1$, and the malicious users have to guess the $\beta$'s used for all $\dimension$ dimensions to fool the integrity check, the forgery probability is small enough to guarantee the security and correctness of the scheme.

\subsubsection{Complexity}
In the training phase, the computation for each user sharing a scalar is, according to  \cref{eq:sharing}, the addition of a publicly known value and a secret share, thus $O(1)$, and that of one share multiplication is $O(1)$ as well by consuming one Beaver triple, which only involves addition and scaling of the secret shares. For the federator, the computation is to reconstruct the intermediate results during share multiplications and the final results, which is a Lagrange polynomial interpolation problem and costs $O(n\log^2n\log\log n)$ per scalar. The communication cost for secret sharing and computations on secret shares are $O(1)$ for each user and $O(n)$ for the federator per scalar. The rest of the proof follows by counting, and is omitted here for the sake of brevity.
\end{proof}

\subsection{Proof for convergence}\label{proof:convergence}
For the proof of convergence, we need \cref{lem:bounded}.
\begin{lem}\label{lem:bounded}
    At each global iteration $\globalIteration$, the distance between the federator's aggregation of the gradients $\modelupdate^{(\globalIteration)}$ and the gradient $\nabla F(\model^{(\globalIteration)})$ is bounded from above as:
    \[
    \| \modelupdate^{(\globalIteration)} - \nabla F(\model^{(\globalIteration)}) \| \leq
    3\| \modelupdate_0^{(\globalIteration)} - \nabla F(\model^{(\globalIteration)}) \| + 2 \|\nabla F(\model^{(\globalIteration)}) \|.
    \]
\end{lem}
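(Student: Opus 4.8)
The plan is to reduce the lemma to a short chain of triangle inequalities, the crucial input being that the model update $\modelupdate^{(\globalIteration)}$ produced by the federator has, by construction, exactly the same Euclidean norm as the federator's own update $\modelupdate_0^{(\globalIteration)}$. This parallels the argument in~\cite[Lemma~1]{cao2020fltrust}, adapted to our renormalization rule.

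First I would split, via the triangle inequality,
\[
\| \modelupdate^{(\globalIteration)} - \nabla F(\model^{(\globalIteration)}) \| \le \| \modelupdate^{(\globalIteration)} - \modelupdate_0^{(\globalIteration)} \| + \| \modelupdate_0^{(\globalIteration)} - \nabla F(\model^{(\globalIteration)}) \|,
\]
so that it remains to control $\| \modelupdate^{(\globalIteration)} - \modelupdate_0^{(\globalIteration)} \|$. Next, I would read off from \cref{eq:modelupdate} that in either sign branch, and irrespective of how the (possibly Byzantine-influenced) quantity $\aggresult^{(\globalIteration)}$ is oriented, one has $\|\modelupdate^{(\globalIteration)}\| = \|\modelupdate_0^{(\globalIteration)}\| \cdot \|\aggresult^{(\globalIteration)}\|/\|\aggresult^{(\globalIteration)}\| = \|\modelupdate_0^{(\globalIteration)}\|$. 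Hence the triangle inequality gives $\| \modelupdate^{(\globalIteration)} - \modelupdate_0^{(\globalIteration)} \| \le \|\modelupdate^{(\globalIteration)}\| + \|\modelupdate_0^{(\globalIteration)}\| = 2\|\modelupdate_0^{(\globalIteration)}\|$, and a further application yields $\|\modelupdate_0^{(\globalIteration)}\| \le \|\modelupdate_0^{(\globalIteration)} - \nabla F(\model^{(\globalIteration)})\| + \|\nabla F(\model^{(\globalIteration)})\|$. Substituting these two estimates back into the split above and collecting terms produces the claimed bound $3\|\modelupdate_0^{(\globalIteration)} - \nabla F(\model^{(\globalIteration)})\| + 2\|\nabla F(\model^{(\globalIteration)})\|$.

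There is no genuine analytic obstacle here; what is worth emphasizing is the conceptual point. The bound must remain valid even when Byzantine users steer $\aggresult^{(\globalIteration)}$ into an arbitrary direction, and the reason it does is precisely the renormalization in \cref{eq:modelupdate}: it caps $\|\modelupdate^{(\globalIteration)}\|$ at $\|\modelupdate_0^{(\globalIteration)}\|$, so the adversary's influence on the \emph{magnitude} of $\modelupdate^{(\globalIteration)}$ is already fully absorbed, and the trust-score weighting and discriminator polynomial play no role at this stage (they enter only in the convergence analysis of \cref{thm:convergence}). The one bookkeeping detail to dispatch is the degenerate case $\aggresult^{(\globalIteration)}=\bm{0}$, which the protocol already rules out, so the division in \cref{eq:modelupdate} is well defined and the equality $\|\modelupdate^{(\globalIteration)}\| = \|\modelupdate_0^{(\globalIteration)}\|$ holds unconditionally.
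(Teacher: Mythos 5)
Your proof is correct and follows essentially the same route as the paper: split off $\| \modelupdate_0^{(\globalIteration)} - \nabla F(\model^{(\globalIteration)}) \|$ by the triangle inequality, bound $\| \modelupdate^{(\globalIteration)} - \modelupdate_0^{(\globalIteration)} \| \le 2\|\modelupdate_0^{(\globalIteration)}\|$ using the fact that \cref{eq:modelupdate} forces $\|\modelupdate^{(\globalIteration)}\| = \|\modelupdate_0^{(\globalIteration)}\|$, and then peel off $\nabla F(\model^{(\globalIteration)})$ once more. The only (harmless) difference is that the paper takes a detour via $\| \modelupdate - \modelupdate_0 \| \le \| \modelupdate + \modelupdate_0 \|$, which is where it invokes $\langle \modelupdate, \modelupdate_0 \rangle \ge 0$; as you correctly observe, applying the triangle inequality to $\| \modelupdate - \modelupdate_0 \|$ directly makes that sign condition unnecessary for this lemma.
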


\begin{proof}
We omit the superscript ${(\globalIteration)}$ for ease of notation. 
Recall the definition of  $\aggresult$
\[
\aggresult \define \frac{1}{\sum_{i\in [\TotalUser]}{\discripoly(\langle \normalmodelupdate_0,\normalmodelupdate_i \rangle)}}\sum_{i\in [\TotalUser]}{\discripoly(\langle \normalmodelupdate_0,\normalmodelupdate_i \rangle) \cdot \normalmodelupdate_i}.
\]
Therefore, since \cref{eq:modelupdate},  $\langle \modelupdate,\modelupdate_0 \rangle \geq 0$ always holds.
We can bound $\| \modelupdate - \nabla F(\model) \|$ as
\begin{align*}
\| \modelupdate - \nabla F(\model) \|
    &= \| \modelupdate - \modelupdate_0 + \modelupdate_0 - \nabla F(\model) \| \\
    &\stackrel{(a)}{\leq} \| \modelupdate - \modelupdate_0 \| + \| \modelupdate_0 - \nabla F(\model) \| \\
    &\stackrel{(b)}{\leq} \| \modelupdate + \modelupdate_0 \| + \| \modelupdate_0 - \nabla F(\model) \| \\
    &\stackrel{(c)}{\leq} \| \modelupdate \| + \| \modelupdate_0 \| + \| \modelupdate_0 - \nabla F(\model) \| \\
    &= 2\| \modelupdate_0 \| + \| \modelupdate_0 - \nabla F(\model) \| \\
    &= 2\| \modelupdate_0 - \nabla F(\model) + \nabla F(\model) \| \\
    & ~~~~ + \| \modelupdate_0 - \nabla F(\model) \| \\
    &\stackrel{(d)}{\leq} 3\| \modelupdate_0 - \nabla F(\model) \| + 2 \|\nabla F(\model) \|,
\end{align*}
where (a), (b), and (d) follow from the triangle inequality, and (c) follows since $\langle \modelupdate,\modelupdate_0 \rangle \geq 0$.
\end{proof}

Using \cref{lem:bounded} and the same assumptions as in~\cite[Assumptions 1, 2 and 3]{cao2020fltrust}, the proof of convergence of \ourscheme follows the same steps as the proofs in Appendix A of~\cite{cao2020fltrust}.

\bibliographystyle{IEEEtran}
\bibliography{IEEEabrv,refs}

% Generated by IEEEtran.bst, version: 1.14 (2015/08/26)
\begin{thebibliography}{10}
\providecommand{\url}[1]{#1}
\csname url@samestyle\endcsname
\providecommand{\newblock}{\relax}
\providecommand{\bibinfo}[2]{#2}
\providecommand{\BIBentrySTDinterwordspacing}{\spaceskip=0pt\relax}
\providecommand{\BIBentryALTinterwordstretchfactor}{4}
\providecommand{\BIBentryALTinterwordspacing}{\spaceskip=\fontdimen2\font plus
\BIBentryALTinterwordstretchfactor\fontdimen3\font minus
  \fontdimen4\font\relax}
\providecommand{\BIBforeignlanguage}[2]{{%
\expandafter\ifx\csname l@#1\endcsname\relax
\typeout{** WARNING: IEEEtran.bst: No hyphenation pattern has been}%
\typeout{** loaded for the language `#1'. Using the pattern for}%
\typeout{** the default language instead.}%
\else
\language=\csname l@#1\endcsname
\fi
#2}}
\providecommand{\BIBdecl}{\relax}
\BIBdecl

\bibitem{xia2024byzantine}
Y.~Xia, C.~Hofmeister, M.~Egger, and R.~Bitar, ``Byzantine-resilient secure
  aggregation for federated learning without privacy compromises,'' in
  \emph{Proc. IEEE ITW}, 2024.

\bibitem{xia2024lobyitfl}
------, ``Lobyitfl: Low communication secure and private federated learning,''
  \emph{Proc. FL-AsiaCCS}, 2025.

\bibitem{mcmahan2017communication}
B.~McMahan, E.~Moore, D.~Ramage, S.~Hampson, and B.~A. y~Arcas,
  ``Communication-efficient learning of deep networks from decentralized
  data,'' in \emph{Proc. AISTATS}, 2017.

\bibitem{zhu2019deep}
L.~Zhu, Z.~Liu, and S.~Han, ``Deep leakage from gradients,'' \emph{Adv. Neural
  Inf. Process. Syst.}, 2019.

\bibitem{geiping2020inverting}
J.~Geiping, H.~Bauermeister, H.~Dr{\"o}ge, and M.~Moeller, ``Inverting
  gradients: How easy is it to break privacy in federated learning?''
  \emph{Adv. Neural Inf. Process. Syst.}, 2020.

\bibitem{bonawitz2017practical}
K.~Bonawitz, V.~Ivanov, B.~Kreuter \emph{et~al.}, ``Practical secure
  aggregation for privacy-preserving machine learning,'' in \emph{Proc. ACM
  CCS}, 2017.

\bibitem{aono2017privacy}
L.~T. Phong, Y.~Aono, T.~Hayashi, L.~Wang, and S.~Moriai, ``Privacy-preserving
  deep learning via additively homomorphic encryption,'' \emph{IEEE Trans. Inf.
  Forensics Security}, 2018.

\bibitem{kairouz2021advances}
P.~Kairouz, H.~B. McMahan, B.~Avent \emph{et~al.}, ``Advances and open problems
  in federated learning,'' \emph{Found. Trends Mach. Learn.}, 2021.

\bibitem{lamport1982byzantine}
L.~Lamport, R.~Shostak, and M.~Pease, ``The byzantine generals problem,''
  \emph{ACM Trans. Program. Lang. Syst.}, 1982.

\bibitem{blanchard2017machine}
P.~Blanchard, E.~M. El~Mhamdi, R.~Guerraoui, and J.~Stainer, ``Machine learning
  with adversaries: Byzantine tolerant gradient descent,'' \emph{Adv. Neural
  Inf. Process. Syst.}, 2017.

\bibitem{yin2018byzantine}
D.~Yin, Y.~Chen, R.~Kannan, and P.~Bartlett, ``Byzantine-robust distributed
  learning: Towards optimal statistical rates,'' in \emph{Proc. ICML}, 2018.

\bibitem{guerraoui2018hidden}
R.~Guerraoui, S.~Rouault \emph{et~al.}, ``The hidden vulnerability of
  distributed learning in byzantium,'' in \emph{Proc. ICML}, 2018.

\bibitem{cao2020fltrust}
X.~Cao, M.~Fang, J.~Liu, and N.~Gong, ``Fltrust: Byzantine-robust federated
  learning via trust bootstrapping,'' in \emph{Proc. NDSS}, 2021.

\bibitem{zhao2022fedinv}
B.~Zhao, P.~Sun, T.~Wang, and K.~Jiang, ``Fedinv: Byzantine-robust federated
  learning by inversing local model updates,'' in \emph{Proc. AAAI}, 2022.

\bibitem{guerraoui2024robust}
R.~Guerraoui, N.~Gupta, and R.~Pinot, \emph{Robust Machine Learning}.\hskip 1em
  plus 0.5em minus 0.4em\relax Springer, 2024.

\bibitem{velicheti2021secure}
R.~K. Velicheti, D.~Xia, and O.~Koyejo, ``Secure byzantine-robust distributed
  learning via clustering,'' \emph{arXiv preprint arXiv:2110.02940}, 2021.

\bibitem{xhemrishi2025fedgt}
M.~Xhemrishi, J.~{\"O}stman, A.~Wachter-Zeh, and A.~G. i~Amat, ``Fedgt:
  Identification of malicious clients in federated learning with secure
  aggregation,'' \emph{IEEE Trans. Inf. Forensics Security}, 2025.

\bibitem{he2020secure}
L.~He, S.~P. Karimireddy, and M.~Jaggi, ``Secure byzantine-robust machine
  learning,'' \emph{arXiv preprint arXiv:2006.04747}, 2020.

\bibitem{hao2021efficient}
M.~Hao, H.~Li, G.~Xu, H.~Chen, and T.~Zhang, ``Efficient, private and robust
  federated learning,'' in \emph{Proc. ACSAC}, 2021.

\bibitem{ma2022shieldfl}
Z.~Ma, J.~Ma, Y.~Miao, Y.~Li, and R.~H. Deng, ``Shieldfl: Mitigating model
  poisoning attacks in privacy-preserving federated learning,'' \emph{IEEE
  Trans. Inf. Forensics Security}, 2022.

\bibitem{xu2022privacy}
G.~Xu, H.~Li, Y.~Zhang, S.~Xu, J.~Ning, and R.~H. Deng, ``Privacy-preserving
  federated deep learning with irregular users,'' \emph{IEEE Trans. Dependable
  Secure Comput.}, 2022.

\bibitem{zhong2024wvfl}
Y.~Zhong, W.~Tan, Z.~Xu, S.~Chen, J.~Weng, and J.~Weng, ``Wvfl: Weighted
  verifiable secure aggregation in federated learning,'' \emph{IEEE Internet
  Things J.}, 2024.

\bibitem{dong2021oblivious}
Y.~Dong, X.~Chen, K.~Li, D.~Wang, and S.~Zeng, ``Flod: Oblivious defender for
  private byzantine-robust federated learning with dishonest-majority,'' in
  \emph{Proc. ESORICS}, 2021.

\bibitem{miao2024rfed}
Y.~Miao, X.~Yan, X.~Li, S.~Xu, X.~Liu, H.~Li, and R.~H. Deng, ``Rfed:
  Robustness-enhanced privacy-preserving federated learning against poisoning
  attack,'' \emph{IEEE Trans. Inf. Forensics Security}, 2024.

\bibitem{so2020byzantine}
J.~So, B.~G{\"u}ler, and A.~S. Avestimehr, ``Byzantine-resilient secure
  federated learning,'' \emph{IEEE J. Sel. Areas Commun.}, 2020.

\bibitem{jahani2023byzantine}
T.~Jahani-Nezhad, M.~A. Maddah-Ali, and G.~Caire, ``Byzantine-resistant secure
  aggregation for federated learning based on coded computing and vector
  commitment,'' \emph{arXiv preprints}, 2023.

\bibitem{roy2022eiffel}
A.~Roy~Chowdhury, C.~Guo, S.~Jha, and L.~van~der Maaten, ``Eiffel: Ensuring
  integrity for federated learning,'' in \emph{Proc. ACM CCS}, 2022.

\bibitem{gehlhar2023SAFEFL}
T.~Gehlhar, F.~Marx, T.~Schneider, A.~Suresh, T.~Wehrle, and H.~Yalame,
  ``Safefl: Mpc-friendly framework for private and robust federated learning,''
  2023.

\bibitem{fereidooni2021safelearn}
H.~Fereidooni, S.~Marchal, M.~Miettinen \emph{et~al.}, ``Safelearn: Secure
  aggregation for private federated learning,'' in \emph{IEEE Security and
  Privacy Workshops (SPW)}, 2021.

\bibitem{dong2023privacy}
C.~Dong, J.~Weng, M.~Li \emph{et~al.}, ``Privacy-preserving and
  byzantine-robust federated learning,'' \emph{IEEE Trans. Dependable Secure
  Comput.}, 2023.

\bibitem{ben2024scionfl}
Y.~Ben-Itzhak, H.~M{\"o}llering, B.~Pinkas \emph{et~al.}, ``Scionfl: Efficient
  and robust secure quantized aggregation,'' in \emph{Proc. IEEE SaTML}, 2024.

\bibitem{hu2023efficient}
G.~Hu, H.~Li, T.~Wu, W.~Fan, and Y.~Zhang, ``Efficient byzantine-robust and
  privacy-preserving federated learning on compressive domain,'' \emph{IEEE
  Internet Things J.}, 2023.

\bibitem{zhang2024nspfl}
Z.~Zhang and Y.~Li, ``Nspfl: A novel secure and privacy-preserving federated
  learning with data integrity auditing,'' \emph{IEEE Trans. Inf. Forensics
  Security}, 2024.

\bibitem{lu2023robust}
Z.~Lu, S.~Lu, X.~Tang, and J.~Wu, ``Robust and verifiable privacy federated
  learning,'' \emph{IEEE Trans. Artif. Intell.}, 2023.

\bibitem{lycklama2023rofl}
H.~Lycklama, L.~Burkhalter, A.~Viand, N.~K{\"u}chler, and A.~Hithnawi, ``Rofl:
  Robustness of secure federated learning,'' in \emph{Proc. IEEE S\&P}, 2023.

\bibitem{alebouyeh4793556privacy}
Z.~Alebouyeh and A.~Jalaly~Bidgoly, ``Privacy-preserving federated learning
  compatible with robust aggregators,'' \emph{Available at SSRN 4793556}, 2024.

\bibitem{hou2024priroagg}
S.~Hou, S.~Li, T.~Jahani-Nezhad, and G.~Caire, ``Priroagg: Achieving robust
  model aggregation with minimum privacy leakage for federated learning,''
  \emph{IEEE Trans. Inf. Forensics Security}, 2025.

\bibitem{xing2024no}
Z.~Xing, Z.~Zhang, Z.~Zhang, J.~Liu, L.~Zhu, and G.~Russello, ``No vandalism:
  Privacy-preserving and byzantine-robust federated learning,'' \emph{arXiv
  preprint arXiv:2406.01080}, 2024.

\bibitem{yazdinejad2024robust}
A.~Yazdinejad, A.~Dehghantanha, H.~Karimipour, G.~Srivastava, and R.~M. Parizi,
  ``A robust privacy-preserving federated learning model against model
  poisoning attacks,'' \emph{IEEE Trans. Inf. Forensics Security}, 2024.

\bibitem{naseri2020local}
M.~Naseri, J.~Hayes, and E.~De~Cristofaro, ``Local and central differential
  privacy for robustness and privacy in federated learning,'' \emph{arXiv
  preprint arXiv:2009.03561}, 2020.

\bibitem{ma2022differentially}
X.~Ma, X.~Sun, Y.~Wu, Z.~Liu, X.~Chen, and C.~Dong, ``Differentially private
  byzantine-robust federated learning,'' \emph{IEEE Trans. Parallel Distrib.
  Syst.}, 2022.

\bibitem{allouah2023privacy}
Y.~Allouah, R.~Guerraoui, N.~Gupta, R.~Pinot, and J.~Stephan, ``On the
  privacy-robustness-utility trilemma in distributed learning,'' in \emph{Proc.
  ICML}, 2023.

\bibitem{gu2023dp}
X.~Gu, M.~Li, and L.~Xiong, ``Dp-brem: Differentially-private and
  byzantine-robust federated learning with client momentum,'' \emph{arXiv
  preprint arXiv:2306.12608}, 2023.

\bibitem{chen2022fundamental}
W.-N. Chen, C.~A.~C. Choo, P.~Kairouz, and A.~T. Suresh, ``The fundamental
  price of secure aggregation in differentially private federated learning,''
  in \emph{Proc. ICML}, 2022, pp. 3056--3089.

\bibitem{shamir1979share}
A.~Shamir, ``How to share a secret,'' \emph{Commun. ACM}, 1979.

\bibitem{mceliece1981sharing}
R.~J. McEliece and D.~V. Sarwate, ``On sharing secrets and reed-solomon
  codes,'' \emph{Commun. ACM}, 1981.

\bibitem{yu2019lagrange}
Q.~Yu, S.~Li, N.~Raviv, S.~M.~M. Kalan, M.~Soltanolkotabi, and S.~A.
  Avestimehr, ``Lagrange coded computing: Optimal design for resiliency,
  security, and privacy,'' in \emph{Proc. AISTATS}, 2019.

\bibitem{bendlin2011semi}
R.~Bendlin, I.~Damg{\aa}rd, C.~Orlandi, and S.~Zakarias, ``Semi-homomorphic
  encryption and multiparty computation,'' in \emph{Proc. EUROCRYPT}, 2011.

\bibitem{gennaro1998simplified}
R.~Gennaro, M.~O. Rabin, and T.~Rabin, ``Simplified vss and fast-track
  multiparty computations with applications to threshold cryptography,'' in
  \emph{Proc. ACM PODC}, 1998.

\bibitem{asharov2017full}
G.~Asharov and Y.~Lindell, ``A full proof of the bgw protocol for perfectly
  secure multiparty computation,'' \emph{J. Cryptology}, 2017.

\bibitem{beaver1992efficient}
D.~Beaver, ``Efficient multiparty protocols using circuit randomization,'' in
  \emph{Adv. Cryptology—CRYPTO'91}, 1992.

\bibitem{BGW}
M.~Ben-Or, S.~Goldwasser, and A.~Wigderson, ``Completeness theorems for
  non-cryptographic fault-tolerant distributed computation,'' in \emph{Proc.
  ACM STOC}, 1988.

\bibitem{shannon1949communication}
C.~E. Shannon, ``Communication theory of secrecy systems,'' \emph{Bell Syst.
  Tech. J.}, 1949.

\bibitem{deng2012mnist}
L.~Deng, ``The mnist database of handwritten digit images for machine learning
  research,'' \emph{IEEE Signal Process. Mag.}, 2012.

\bibitem{xiao2017fashion}
H.~Xiao, K.~Rasul, and R.~Vollgraf, ``Fashion-mnist: A novel image dataset for
  benchmarking machine learning algorithms,'' \emph{arXiv preprint
  arXiv:1708.07747}, 2017.

\bibitem{krizhevsky2009learning}
A.~Krizhevsky and G.~Hinton, ``Learning multiple layers of features from tiny
  images,'' Univ. of Toronto, Tech. Rep., 2009.

\bibitem{shejwalkar2021manipulating}
V.~Shejwalkar and A.~Houmansadr, ``Manipulating the byzantine: Optimizing model
  poisoning attacks and defenses for federated learning,'' in \emph{Proc.
  NDSS}, 2021.

\bibitem{baruch2019little}
G.~Baruch, M.~Baruch, and Y.~Goldberg, ``A little is enough: Circumventing
  defenses for distributed learning,'' \emph{Adv. Neural Inf. Process. Syst.},
  2019.

\bibitem{fang2020local}
M.~Fang, X.~Cao, J.~Jia, and N.~Gong, ``Local model poisoning attacks to
  byzantine-robust federated learning,'' in \emph{Proc. USENIX Sec. Symp.},
  2020.

\bibitem{bagdasaryan2020backdoor}
E.~Bagdasaryan, A.~Veit, Y.~Hua, D.~Estrin, and V.~Shmatikov, ``How to backdoor
  federated learning,'' in \emph{Proc. AISTATS}, 2020.

\bibitem{egger2023private}
M.~Egger, C.~Hofmeister, A.~Wachter-Zeh, and R.~Bitar, ``Private aggregation in
  wireless federated learning with heterogeneous clusters,'' in \emph{Proc.
  IEEE ISIT}, 2023.

\bibitem{zhang2025fundamental}
X.~Zhang, Z.~Li, K.~Wan, H.~Sun, M.~Ji, and G.~Caire, ``Fundamental limits of
  hierarchical secure aggregation with cyclic user association,'' \emph{arXiv
  preprint arXiv:2503.04564}, 2025.

\end{thebibliography}

\clearpage

\end{document}